\newtheorem{theorem}{Theorem}
\newtheorem{lemma}{Lemma}
\title{Natural-Parameter Networks: \\A Class of Probabilistic Neural Networks}
\author{
  Hao Wang, Xingjian Shi, Dit-Yan Yeung\\
  Hong Kong University of Science and Technology\\
  \texttt{\{hwangaz,xshiab,dyyeung\}@cse.ust.hk} \\
  %% examples of more authors
  %% \And
  %% Coauthor \\
  %% Affiliation \\
  %% Address \\
  %% \texttt{email} \\
  %% \AND
  %% Coauthor \\
  %% Affiliation \\
  %% Address \\
  %% \texttt{email} \\
  %% \And
  %% Coauthor \\
  %% Affiliation \\
  %% Address \\
  %% \texttt{email} \\
  %% \And
  %% Coauthor \\
  %% Affiliation \\
  %% Address \\
  %% \texttt{email} \\
}
\def\A{{\bf A}}
\def\a{{\bf a}}
\def\b{{\bf b}}
\def\H{{\bf H}}
\def\X{{\bf X}}
\def\oo{{\bf o}}
\def\x{{\bf x}}
\def\y{{\bf y}}
\def\z{{\bf z}}
\def\W{{\bf W}}
\def\0{{\bf 0}}
\def\1{{\bf 1}}
\def\NM{{\mathcal N}}
\def\et{\mbox{\boldmath$\eta$\unboldmath}}
\def\ep{\mbox{\boldmath$\epsilon$\unboldmath}}
\begin{document}
% \nipsfinalcopy is no longer used
\language0
\lefthyphenmin=2
\righthyphenmin=3

\maketitle

\begin{abstract}
Neural networks (NN) have achieved state-of-the-art performance in various applications. Unfortunately in applications where training data is insufficient, they are often prone to overfitting. One effective way to alleviate this problem is to exploit the Bayesian approach by using Bayesian neural networks (BNN). Another shortcoming of NN is the lack of flexibility to customize different distributions for the weights and neurons according to the data, as is often done in probabilistic graphical models. To address these problems, we propose a class of probabilistic neural networks, dubbed natural-parameter networks (NPN), as a novel and lightweight Bayesian treatment of NN. NPN allows the usage of arbitrary exponential-family distributions to model the weights and neurons. Different from traditional NN and BNN, NPN takes distributions as input and goes through layers of transformation before producing distributions to match the target output distributions. As a Bayesian treatment, efficient backpropagation (BP) is performed to learn the natural parameters for the distributions over both the weights and neurons. The output distributions of each layer, as byproducts, may be used as second-order representations for the associated tasks such as link prediction. Experiments on real-world datasets show that NPN can achieve state-of-the-art performance.
\end{abstract}

\section{Introduction}
Recently neural networks (NN) have achieved state-of-the-art performance in various applications ranging from computer vision \cite{DBLP:conf/cvpr/KarpathyL15} to natural language processing \cite{DBLP:journals/ijar/SalakhutdinovH09}. However, NN trained by stochastic gradient descent (SGD) or its variants is known to suffer from overfitting especially when training data is insufficient. Besides overfitting, another problem of NN comes from the underestimated uncertainty, which could lead to poor performance in applications like active learning. %\cite{DBLP:conf/cvpr/KarpathyL15} \cite{DBLP:journals/ijar/SalakhutdinovH09}

Bayesian neural networks (BNN) offer the promise of tackling these problems in a principled way. Early BNN works include methods based on Laplace approximation \cite{mackay1992practical}, variational inference (VI) \cite{hinton1993keeping}, and Monte Carlo sampling \cite{neal1995bayesian}, but they have not been widely adopted due to their lack of scalability. Some recent advances in this direction seem to shed light on the practical adoption of BNN. \cite{DBLP:conf/nips/Graves11} proposed a method based on VI in which a Monte Carlo estimate of a lower bound on the marginal likelihood is used to infer the weights. Recently, \cite{DBLP:conf/icml/Hernandez-Lobato15b} used an online version of expectation propagation (EP), called `probabilistic back propagation' (PBP), for the Bayesian learning of NN, and \cite{DBLP:conf/icml/BlundellCKW15} proposed `Bayes by Backprop' (BBB), which can be viewed as an extension of \cite{DBLP:conf/nips/Graves11} based on the `reparameterization trick' \cite{DBLP:journals/corr/KingmaW13}. More recently, an interesting Bayesian treatment called `Bayesian dark knowledge' (BDK) was designed to approximate a teacher network with a simpler student network based on stochastic gradient Langevin dynamics (SGLD) \cite{balan2015bayesian}.

Although these recent methods are more practical than earlier ones, several outstanding problems remain to be addressed: (1) most of these methods require sampling either at training time \cite{DBLP:conf/nips/Graves11,DBLP:conf/icml/BlundellCKW15,balan2015bayesian} or at test time \cite{DBLP:conf/icml/BlundellCKW15}, incurring much higher cost than a `vanilla' NN; (2) as mentioned in \cite{balan2015bayesian}, methods based on online EP or VI do not involve sampling, but they need to compute the predictive density by integrating out the parameters, which is computationally inefficient; (3) these methods assume Gaussian distributions for the weights and neurons, allowing no flexibility to customize different distributions according to the data as is done in probabilistic graphical models (PGM).%asdfasdfadsfds

To address the problems, we propose \emph{natural-parameter networks} (NPN) as a class of probabilistic neural networks where the input, target output, weights, and neurons can all be modeled by arbitrary exponential-family distributions (e.g., Poisson distributions for word counts) instead of being limited to Gaussian distributions. Input distributions go through layers of linear and nonlinear transformation deterministically before producing distributions to match the target output distributions (previous work \cite{DBLP:journals/jmlr/VincentLLBM10} shows that providing distributions as input by corrupting the data with noise plays the role of regularization). As byproducts, output distributions of intermediate layers may be used as second-order representations for the associated tasks.
%SNN allows arbitrary distributions for weights and neurons as long as they are in the exponential family (e.g., Poisson distributions for word counts and Gaussian distributions for pixels).
Thanks to the properties of the exponential family \cite{Bishop2006,DBLP:conf/aistats/RanganathTCB15}, distributions in NPN are defined by the corresponding natural parameters which can be learned efficiently by backpropagation. Unlike \cite{DBLP:conf/icml/BlundellCKW15,balan2015bayesian}, NPN explicitly propagates the estimates of uncertainty back and forth in deep networks. This way the uncertainty estimates for each layer of neurons are readily available for the associated tasks. Our experiments show that such information is helpful when neurons of intermediate layers are used as representations like in autoencoders (AE). In summary, our main contributions are:
\begin{compactitem}
\item We propose NPN as a class of probabilistic neural networks. Our model combines the merits of NN and PGM in terms of computational efficiency and flexibility to customize the types of distributions for different types of data.
\item Leveraging the properties of the exponential family, some sampling-free backpropagation-compatible algorithms are designed to efficiently learn the distributions over weights by learning the natural parameters.
\item Unlike most probabilistic NN models, NPN obtains the uncertainty of intermediate-layer neurons as byproducts, which provide valuable information to the learned representations. Experiments on real-world datasets show that NPN can achieve state-of-the-art performance on classification, regression, and unsupervised representation learning tasks.
\end{compactitem}

\section{Natural-Parameter Networks}
The exponential family refers to an important class of distributions with useful algebraic properties. Distributions in the exponential family have the form
$
p(x|\et) = h(x)g(\et)\exp\{\et^Tu(x)\},
$
where $x$ is the random variable, $\et$ denotes the natural parameters, $u(x)$ is a vector of sufficient statistics, and $g(\et)$ is the normalizer. For a given type of distributions, different choices of $\et$ lead to different shapes. For example, a univariate Gaussian distribution with $\et=(c, d)^T$ corresponds to $\NM(-\frac{c}{2d},-\frac{1}{2d})$.

Motivated by this observation, in NPN, only the natural parameters need to be learned to model the distributions over the weights and neurons. Consider an NPN which takes a vector random distribution (e.g., a multivariate Gaussian distribution) as input, multiplies it by a matrix random distribution, goes through nonlinear transformation, and outputs another distribution. Since all three distributions in the process can be specified by their natural parameters (given the types of distributions), learning and prediction of the network can actually operate in the space of natural parameters. For example, if we use element-wise (factorized) gamma distributions for both the weights and neurons, the NPN counterpart of a vanilla network only needs twice the number of free parameters (weights) and neurons since there are two natural parameters for each univariate gamma distribution.

\subsection{Notation and Conventions}
We use boldface uppercase letters like $\W$ to denote matrices and boldface lowercase letters like $\b$ for vectors. Similarly, a boldface number (e.g., $\1$ or $\0$) represents a row vector or a matrix with identical entries. In NPN, $\oo^{(l)}$ is used to denote the values of neurons in layer $l$ before nonlinear transformation and $\a^{(l)}$ is for the values after nonlinear transformation. As mentioned above, NPN tries to learn \emph{distributions} over variables rather than \emph{variables} themselves. Hence we use letters \emph{without} subscripts $c$, $d$, $m$, and $s$ (e.g., $\oo^{(l)}$ and $\a^{(l)}$) to denote `random variables' with corresponding distributions. Subscripts $c$ and $d$ are used to denote natural parameter pairs, such as $\W_c$ and $\W_d$. Similarly, subscripts $m$ and $s$ are for mean-variance pairs. Note that for clarity, many operations used below are implicitly element-wise, for example, the square $\z^2$, division $\frac{\z}{\b}$, partial derivative $\frac{\partial \z}{\partial \b}$, the gamma function $\Gamma(\z)$, logarithm $\log\z$, factorial $\z!$, $1+\z$, and $\frac{1}{\z}$. For the data $\mathcal{D}=\{(\x_i,\y_i)\}_{i=1}^{N}$, we set $\a_m^{(0)}=\x_i,\a_s^{(0)}=\0$ (Input distributions with $\a_s^{(0)}\ne\0$ resemble AE's denoising effect.) as input of the network and $\y_i$ denotes the output targets (e.g., labels and word counts). In the following text we drop the subscript $i$ (and sometimes the superscript $(l)$) for clarity. The bracket $(\cdot,\cdot)$ denotes concatenation or pairs of vectors. %\footnote{x and y}

\subsection{Linear Transformation in NPN}
Here we first introduce the linear form of a general NPN. For simplicity, we assume distributions with two natural parameters (e.g., gamma distributions, beta distributions, and Gaussian distributions), $\et=(c,d)^T$, in this section. Specifically, we have factorized distributions on the weight matrices, $p(\W^{(l)}|\W_c^{(l)},\W_d^{(l)})=\prod_{i,j} p(\W_{ij}^{(l)}|\W_{c,ij}^{(l)},\W_{d,ij}^{(l)})$, where the pair $(\W_{c,ij}^{(l)},\W_{d,ij}^{(l)})$ is the corresponding natural parameters. For $\b^{(l)}$, $\oo^{(l)}$, and $\a^{(l)}$ we assume similar factorized distributions.

In a traditional NN, the linear transformation follows $\oo^{(l)}=\a^{(l-1)}\W^{(l)}+\b^{(l)}$ where $\a^{(l-1)}$ is the output from the previous layer. In NN $\a^{(l-1)}$, $\W^{(l)}$, and $\b^{(l)}$ are deterministic variables while in NPN they are exponential-family distributions, meaning that the result $\oo^{(l)}$ is also a distribution. For convenience of subsequent computation it is desirable to approximate $\oo^{(l)}$ using another exponential-family distribution. We can do this by matching the mean and variance. Specifically, after computing $(\W_m^{(l)},\W_s^{(l)})=f(\W_c^{(l)},\W_d^{(l)})$ and $(\b_m^{(l)},\b_s^{(l)})=f(\b_c^{(l)},\b_d^{(l)})$, we can get $\oo_c^{(l)}$ and $\oo_d^{(l)}$ through the mean $\oo_m^{(l)}$ and variance $\oo_s^{(l)}$ of $\oo^{(l)}$ as follows:
\begin{align}
(\a_m^{(l-1)},\a_s^{(l-1)})&=f(\a_c^{(l-1)},\a_d^{(l-1)}),~~
\oo_m^{(l)}=\a_m^{(l-1)}\W_m^{(l)}+\b_m^{(l)}, \label{eq:om}\\
\oo_s^{(l)}&=\a_s^{(l-1)}\W_s^{(l)}+\a_s^{(l-1)}(\W_m^{(l)}\circ\W_m^{(l)})+(\a_m^{(l-1)}\circ\a_m^{(l-1)})\W_s^{(l)}+\b_s^{(l)},\label{eq:os}\\
(\oo_c^{(l)},\oo_d^{(l)})&=f^{-1}(\oo_m^{(l)},\oo_s^{(l)}),\label{eq:map_o}
\end{align}
where $\circ$ denotes the element-wise product and the bijective function $f(\cdot,\cdot)$ maps the natural parameters of a distribution into its mean and variance (e.g., $f(c,d)=(\frac{c+1}{-d},\frac{c+1}{d^2})$ in gamma distributions).
Similarly we use $f^{-1}(\cdot,\cdot)$ to denote the inverse transformation. $\W_m^{(l)}$, $\W_s^{(l)}$, $\b_m^{(l)}$, and $\b_s^{(l)}$ are the mean and variance of $\W^{(l)}$ and $\b^{(l)}$ obtained from the natural parameters. The computed $\oo_m^{(l)}$ and $\oo_s^{(l)}$ can then be used to recover $\oo_c^{(l)}$ and $\oo_d^{(l)}$, which will subsequently facilitate the feedforward computation of the nonlinear transformation described in Section \ref{sec:nonlinear}.

\subsection{Nonlinear Transformation in NPN}\label{sec:nonlinear}
After we obtain the linearly transformed distribution over $\oo^{(l)}$ defined by natural parameters $\oo_c^{(l)}$ and $\oo_d^{(l)}$, an element-wise nonlinear transformation $v(\cdot)$ (with a well defined inverse function $v^{-1}(\cdot)$) will be imposed. The resulting activation distribution is $p_a(\a^{(l)})=p_o(v^{-1}(\a^{(l)}))|{v^{-1}}'(\a^{(l)})|$, where $p_o$ is the factorized distribution over $\oo^{(l)}$ defined by $(\oo_{c}^{(l)},\oo_{d}^{(l)})$. %Note that here $\oo^{(l)}$ is the random variable and $\oo_{c}^{(l)}$ and $\oo_{d}^{(l)}$ are parameters.

Though $p_a(\a^{(l)})$ may not be an exponential-family distribution, we can approximate
it with one,
$p(\a^{(l)}|\a_c^{(l)},\a_d^{(l)})$, by matching the first two moments. Once the mean $\a_{m}^{(l)}$ and variance $\a_{s}^{(l)}$ of $p_a(\a^{(l)})$ are obtained, we can compute corresponding natural parameters with $f^{-1}(\cdot,\cdot)$ (approximation accuracy is sufficient according to preliminary experiments). The feedforward computation is:
{\small
\begin{align}
%\small{
\a_{m} = \int p_o(\oo|\oo_{c},\oo_{d})v(\oo) {d} \oo,~~
\a_{s} = \int p_o(\oo|\oo_{c},\oo_{d})v(\oo)^2 {d} \oo-\a_{m}^2,~~ %\label{eq:int}\\
(\a_{c},\a_{d}) &= f^{-1}(\a_{m},\a_{s}). \label{eq:int}
%}
\end{align}
}Here the key computational challenge is computing the integrals in Equation (\ref{eq:int}). Closed-form solutions are needed for their efficient computation. If $p_o(\oo|\oo_{c},\oo_{d})$ is a Gaussian distribution, closed-form solutions exist for common activation functions like $\tanh(x)$ and $\max(0,x)$ (details are in Section \ref{sec:gaussian}). Unfortunately this is not the case for other distributions. Leveraging the convenient form of the exponential family, we find that it is possible to design activation functions so that the integrals for non-Gaussian distributions can also be expressed in closed form.
\begin{theorem}\label{th:moment}
Assume an exponential-family distribution $p_o(x|\et) = h(x)g(\et)\exp\{\et^Tu(x)\}$, where the vector $u(x)=(u_1(x),u_2(x),\dots,u_M(x))^T$ (M is the number of natural parameters). If activation function $v(x)=r-q\exp(-\tau u_i(x))$ is used, the first two moments of $v(x)$, $\int p_o(x|\et)v(x)dx$ and $\int p_o(x|\et)v(x)^2dx$, can be expressed in closed form. Here $i\in\{1,2,\dots,M\}$ (different $u_i(x)$ corresponds to a different set of activation functions) and $r$, $q$, and $\tau$ are constants.
\end{theorem}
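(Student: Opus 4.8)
The plan is to exploit the fact that multiplying the density by $\exp(-\tau u_i(x))$ simply shifts the $i$-th natural parameter, so the integrand stays inside the same exponential family. First I would reduce both moments to a single family of elementary integrals. Since $v(x) = r - q\exp(-\tau u_i(x))$, linearity of integration gives $\int p_o(x|\et)v(x)\,dx = r - qI_1$, and expanding $v(x)^2 = r^2 - 2rq\exp(-\tau u_i(x)) + q^2\exp(-2\tau u_i(x))$ gives $\int p_o(x|\et)v(x)^2\,dx = r^2 - 2rqI_1 + q^2I_2$, where I set $I_k := \int p_o(x|\et)\exp(-k\tau u_i(x))\,dx$. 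Thus the whole statement reduces to evaluating $I_1$ and $I_2$ in closed form.

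The key step is the evaluation of $I_k$. Substituting the exponential-family form and absorbing the extra factor into the exponent,
\begin{align}
I_k = g(\et)\int h(x)\exp\{\et^T u(x) - k\tau u_i(x)\}\,dx = g(\et)\int h(x)\exp\{(\et - k\tau\e_i)^T u(x)\}\,dx, \notag
\end{align}
where $\e_i$ is the $i$-th standard basis vector. The remaining integral is exactly the reciprocal normalizer of the same family at the shifted parameter: because $p_o(x|\et') = h(x)g(\et')\exp\{\et'^T u(x)\}$ integrates to one, we have $\int h(x)\exp\{\et'^T u(x)\}\,dx = 1/g(\et')$. Setting $\et' = \et - k\tau\e_i$ yields $I_k = g(\et)/g(\et - k\tau\e_i)$, which is closed-form since $g(\cdot)$ is known analytically for every exponential-family distribution. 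Substituting back gives
\begin{align}
\int p_o(x|\et)v(x)\,dx &= r - q\frac{g(\et)}{g(\et - \tau\e_i)}, \notag\\
\int p_o(x|\et)v(x)^2\,dx &= r^2 - 2rq\frac{g(\et)}{g(\et - \tau\e_i)} + q^2\frac{g(\et)}{g(\et - 2\tau\e_i)}, \notag
\end{align}
completing the argument.

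The main point to handle carefully is not a deep obstacle but a convergence condition: the reparameterization is legitimate only when the shifted natural parameters $\et - \tau\e_i$ and $\et - 2\tau\e_i$ still lie in the natural-parameter domain of the family, so that $g(\et - k\tau\e_i)$ is finite and the integrals $I_k$ exist. This imposes a mild sign/magnitude restriction on $\tau$ (depending on the range of $u_i$), which I would state explicitly; the constants $r$ and $q$ play no role in convergence and only rescale the activation affinely. I would close by instantiating $u_i$ for a concrete family (e.g.\ taking the appropriate sufficient statistic for the gamma distribution) to exhibit an explicit activation function together with its closed-form first two moments, confirming that the construction is usable in the feedforward computation of Equation~(\ref{eq:int}).
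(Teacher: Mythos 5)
Your proof is correct and follows essentially the same route as the paper's: both expand $v(x)$ and $v(x)^2$ by linearity, absorb each factor $\exp(-k\tau u_i(x))$ into the exponent as a shift of the $i$-th natural parameter, and recognize the resulting integral as the reciprocal normalizer, giving the ratios $g(\et)/g(\widetilde{\et})$ and $g(\et)/g(\widehat{\et})$ exactly as in the paper. Your explicit remark that $\et-\tau\e_i$ and $\et-2\tau\e_i$ must remain in the natural-parameter domain for the normalizers to be finite is a condition the paper leaves implicit, and is a worthwhile addition.
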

\begin{proof}
We first let $\et=(\eta_1,\eta_2,\dots,\eta_M)$, $\widetilde{\et}=(\eta_1,\eta_2,\dots,\eta_i-\tau,\dots,\eta_M)$, and $\widehat{\et}=(\eta_1,\eta_2,\dots,\eta_i-2\tau,\dots,\eta_M)$. The first moment of $v(x)$ is
\begin{align*}
E(v(x))%&=\int p_o(x|\et)(r-q\exp(-\tau u_i(x)))dx\\
&=r-q\int h(x)g(\et)\exp\{\et^Tu(x)-\tau u_i(x)\} \, dx\\
&=r-q\int h(x)\frac{g(\et)}{g(\widetilde{\et})}g(\widetilde{\et})\exp\{\widetilde{\et}^Tu(x)\} \, dx
=r-q\frac{g(\et)}{g(\widetilde{\et})}.
\end{align*}
Similarly the second moment can be computed as $E(v(x)^2)=r^2+q^2\frac{g(\et)}{g(\widehat{\et})}-2rq\frac{g(\et)}{g(\widetilde{\et})}$.
\end{proof}
A more detailed proof is provided in the supplementary material. With Theorem \ref{th:moment}, what remains is to find the constants that make $v(x)$ strictly increasing and bounded (Table \ref{table:activation} shows some exponential-family distributions and their possible activation functions).
For example in Equation (\ref{eq:int}), if $v(x)=r-q\exp(-\tau x)$, $\a_m=r-q(\frac{\oo_d}{\oo_d+\tau})^{\oo_c}$ for the gamma distribution.

\begin{table}[tb]
{\scriptsize
\centering
\vskip -0.1cm
\caption{Activation Functions for Exponential-Family Distributions}\label{table:activation}
\vskip 0.1cm
\begin{tabular}{|l|l|l|l|} \hline
Distribution & Probability Density Function & Activation Function & Support\\ \hline\hline
Beta Distribution & $p(x)=\frac{\Gamma(c+d)}{\Gamma(c)\Gamma(d)}x^{c-1}(1-x)^{d-1}$ & $qx^{\tau},\tau\in(0,1)$ & $[0,1]$\\ \hline
Rayleigh Distribution & $p(x)=\frac{x}{\sigma^2} \exp\{-\frac{x^2}{2\sigma^2}\}$ & $r-q \exp\{-\tau x^2\}$ & $(0,+\infty)$\\ \hline
Gamma Distribution & $p(x)=\frac{1}{\Gamma(c)}d^c x^{c-1} \exp\{-d x\}$ & $r-q \exp\{-\tau x\}$&$(0,+\infty)$ \\ \hline
Poisson Distribution & $p(x)=\frac{c^x \exp\{-c\}}{x!}$ & $r-q \exp\{-\tau x\}$ & Nonnegative interger\\ \hline
Gaussian Distribution & $p(x)=(2\pi\sigma^2)^{-\frac{1}{2}} \exp\{-\frac{1}{2\sigma^2}(x-\mu)^2\}$ & ReLU, tanh, and sigmoid &$(-\infty,+\infty)$ \\ \hline
\end{tabular}
\vskip -0.6cm
}
\end{table}

In the backpropagation, for distributions with two natural parameters the gradient consists of two terms. For example,
$
\frac{\partial E}{\partial \oo_{c}}=\frac{\partial E}{\partial \a_{m}}\circ\frac{\partial \a_{m}}{\partial \oo_{c}}+\frac{\partial E}{\partial \a_{s}}\circ\frac{\partial \a_{s}}{\partial \oo_{c}},
$
where $E$ is the error term of the network.

\begin{algorithm}
\caption{Deep Nonlinear NPN}\label{alg:npn}
\begin{algorithmic}[1]
\STATE \textbf{Input:} Data $\mathcal{D}=\{(\x_i,\y_i)\}_{i=1}^{N}$, number of iterations $T$, learning rate $\rho_t$, number of layers $L$.
\FOR{$t=1:T$}
\FOR{$l=1:L$}
\STATE Apply Equation (\ref{eq:om})-(\ref{eq:int}) to compute the \emph{linear} and \emph{nonlinear} transformation in layer $l$.
\ENDFOR
\STATE Compute the \emph{error} $E$ from $(\oo_c^{(L)},\oo_d^{(L)})$ or $(\a_c^{(L)},\a_d^{(L)})$. \label{algline:error}
\FOR{$l=L:1$}
\STATE Compute $\frac{\partial E}{\partial \W_m^{(l)}}$, $\frac{\partial E}{\partial \W_s^{(l)}}$, $\frac{\partial E}{\partial \b_m^{(l)}}$, and $\frac{\partial E}{\partial \b_s^{(l)}}$.
Compute $\frac{\partial E}{\partial \W_c^{(l)}}$, $\frac{\partial E}{\partial \W_d^{(l)}}$, $\frac{\partial E}{\partial \b_c^{(l)}}$, and $\frac{\partial E}{\partial \b_d^{(l)}}$.
\ENDFOR
\STATE Update $\W_c^{(l)}$, $\W_d^{(l)}$, $\b_c^{(l)}$, and $\b_d^{(l)}$ in all layers.
\ENDFOR
\end{algorithmic}
\end{algorithm}

\subsection{Deep Nonlinear NPN}
Naturally layers of nonlinear NPN can be stacked to form a deep NPN\footnote{Although the approximation accuracy may decrease as NPN gets deeper during feedforward computation, it can be automatically adjusted according to data during backpropagation.}, as shown in Algorithm \ref{alg:npn}\footnote{Note that since the first part of Equation (\ref{eq:om}) and the last part of Equation (\ref{eq:int}) are canceled out, we can directly use $(\a_m^{(l)},\a_s^{(l)})$ without computing $(\a_c^{(l)},\a_d^{(l)})$ here.}.

A deep NPN is in some sense similar to a PGM with a chain structure.  Unlike PGM in general, however, NPN does not need costly inference algorithms like variational inference or Markov chain Monte Carlo. For some chain-structured PGM (e.g, hidden Markov models), efficient inference algorithms also exist due to their special structure. Similarly, the Markov property enables NPN to be efficiently trained in an end-to-end backpropagation learning fashion in the space of natural parameters.

PGM is known to be more flexible than NN in the sense that it can choose different distributions to depict different relationships among variables. A major drawback of PGM is its scalability especially when the PGM is deep. Different from PGM, NN stacks relatively simple computational layers and learns the parameters using backpropagation, which is computationally more efficient than most algorithms for PGM. NPN has the potential to get the best of both worlds. In terms of flexibility, different types of exponential-family distributions can be chosen for the weights and neurons. Using gamma distributions for both the weights and neurons in NPN leads to a deep and nonlinear version of nonnegative matrix factorization \cite{lee2001algorithms} while an NPN with the Bernoulli distribution and sigmoid activation resembles a Bayesian treatment of sigmoid belief networks \cite{neal1990learning}. If Poisson distributions are chosen for the neurons, NPN becomes a neural analogue of deep Poisson factor analysis \cite{DBLP:journals/jmlr/ZhouHDC12,henao2015deep}.

Note that similar to the weight decay in NN, we may add \emph{the KL divergence between the prior distributions and the learned distributions on the weights} to the error $E$ for regularization (we use isotropic Gaussian priors in the experiments). In NPN, the chosen prior distributions correspond to priors in Bayesian models and the learned distributions correspond to the approximation of posterior distributions on weights. Note that the generative story assumed here is that weights are sampled from the prior, and then output is generated (given all data) from these weights.

\section{Variants of NPN}\label{sec:example}
In this section, we introduce three NPN variants with different properties to demonstrate the flexibility and effectiveness of NPN. Note that in practice we use a transformed version of the natural parameters, referred to as \emph{proxy natural parameters} here, instead of the original ones for computational efficiency. For example, in gamma distributions $p(x|c,d)=\Gamma(c)^{-1}d^c x^{c-1}\exp(-dx)$, we use proxy natural parameters $(c,d)$ during computation rather than the natural parameters $(c-1,-d)$.

\subsection{Gamma NPN}\label{sec:gamma}
The gamma distribution with support over positive values is an important member of the exponential family. The corresponding probability density function is $p(x|c,d)=\Gamma(c)^{-1}d^c x^{c-1}\exp(-dx)$ with $(c-1,-d)$ as its natural parameters (we use $(c,d)$ as proxy natural parameters). If we assume gamma distributions for $\W^{(l)}$, $\b^{(l)}$, $\oo^{(l)}$, and $\a^{(l)}$, an AE formed by NPN becomes a deep and nonlinear version of nonnegative matrix factorization \cite{lee2001algorithms}. To see this, note that this AE with activation $v(x)=x$ and zero biases $\b^{(l)}$ is equivalent to finding a factorization of matrix $\X$ such that $\X=\H\prod_{l=\frac{L}{2}}^L\W^{(l)}$ where $\H$ denotes the middle-layer neurons and $\W^{(l)}$ has nonnegative entries from gamma distributions. In this gamma NPN, parameters $\W_c^{(l)}$, $\W_d^{(l)}$, $\b_c^{(l)}$, and $\b_d^{(l)}$ can be learned following Algorithm \ref{alg:npn}. We detail the algorithm as follows: %\footnote{notation: distinguish random variables and learned parameters with subscripts}

\textbf{Linear Transformation}: Since gamma distributions are assumed here, we can use the function $f(c,d)=(\frac{c}{d},\frac{c}{d^2})$ to compute $(\W_m^{(l)},\W_s^{(l)})=f(\W_c^{(l)},\W_d^{(l)})$, $(\b_m^{(l)},\b_s^{(l)})=f(\b_c^{(l)},\b_d^{(l)})$, and $(\oo_c^{(l)},\oo_d^{(l)})=f^{-1}(\oo_m^{(l)},\oo_s^{(l)})$ during the probabilistic linear transformation in Equation (\ref{eq:om})-(\ref{eq:map_o}).

\textbf{Nonlinear Transformation}: With the proxy natural parameters for the gamma distributions over $\oo^{(l)}$, the mean $\a_m^{(l)}$ and variance $\a_s^{(l)}$ for the nonlinearly transformed distribution over $\a^{(l)}$ would be obtained with Equation (\ref{eq:int}). Following Theorem \ref{th:moment}, closed-form solutions are possible with $v(x)=r(1-\exp(-\tau x))$ ($r=q$ and $u_i(x)=x$) where $r$ and $\tau$ are constants. Using this new activation function, we have (see Section \ref{sec:gamma_activation} and \ref{sec:gamma_sup} of the supplementary material for details on the function and derivation):
\begin{align*}
\a_{m} &= \int p_o(\oo|\oo_{c},\oo_{d})v(\oo) {d} \oo
=r(1-\frac{\oo_d^{\oo_c}}{\Gamma(\oo_c)}\circ\Gamma(\oo_c)\circ(\oo_d+\tau)^{-\oo_c})
=r(1-(\frac{\oo_d}{\oo_d+\tau})^{\oo_c}),\\
\a_s &= r^2((\frac{\oo_d}{\oo_d+2\tau})^{\oo_c}-(\frac{\oo_d}{\oo_d+\tau})^{2\oo_c}).
\end{align*}
\textbf{Error}: With $\oo_c^{(L)}$ and $\oo_d^{(L)}$, we can compute the regression error $E$ as the negative log-likelihood:
\begin{align*}
E = (\log\Gamma(\oo_c^{(L)})-\oo_c^{(L)}\circ\log\oo_d^{(L)}
-(\oo_c^{(L)}-1)\circ\log\y+\oo_d^{(L)}\circ\y)\1^T,
\end{align*}
where $\y$ is the observed output corresponding to $\x$. For classification, cross-entropy loss can be used as $E$. Following the computation flow above, BP can be used to learn $\W_c^{(l)}$, $\W_d^{(l)}$, $\b_c^{(l)}$, and $\b_d^{(l)}$.

\subsection{Gaussian NPN}\label{sec:gaussian}
Different from the gamma distribution which has support over positive values only, the Gaussian distribution, also an exponential-family distribution, can describe real-valued random variables.  This makes it a natural choice for NPN. We refer to this NPN variant with Gaussian distributions over both the weights and neurons as Gaussian NPN. Details of Algorithm \ref{alg:npn} for Gaussian NPN are as follows:

\textbf{Linear Transformation}: Besides support over real values, another property of Gaussian distributions is that the mean and variance can be used as proxy natural parameters, leading to an identity mapping function $f(c,d)=(c,d)$ which cuts the computation cost. We can use this function to compute $(\W_m^{(l)},\W_s^{(l)})=f(\W_c^{(l)},\W_d^{(l)})$, $(\b_m^{(l)},\b_s^{(l)})=f(\b_c^{(l)},\b_d^{(l)})$, and $(\oo_c^{(l)},\oo_d^{(l)})=f^{-1}(\oo_m^{(l)},\oo_s^{(l)})$ during the probabilistic linear transformation in Equation (\ref{eq:om})-(\ref{eq:map_o}).

\textbf{Nonlinear Transformation}: If the sigmoid activation $v(x)=\sigma(x)=\frac{1}{1+\exp(-x)}$ is used, $\a_m$ in Equation (\ref{eq:int}) would be (convolution of Gaussian with sigmoid is approximated by another sigmoid):
\begin{align}
\a_m&=\int \NM(\oo|\oo_c,diag(\oo_d))\circ\sigma(\oo)d\oo\approx \sigma(\frac{\oo_c}{(1+\zeta^2\oo_d)^{\frac{1}{2}}}),\label{eq:conv}\\
\a_s&=\int \NM(\oo|\oo_c,diag(\oo_d))\circ\sigma(\oo)^2d\oo-\a_m^2
\approx\sigma(\frac{\alpha(\oo_c+\beta)}{(1+\zeta^2\alpha^2\oo_d)^{1/2}})-\a_m^2, \label{eq:probit_sq}
\end{align}
where $\alpha=4-2\sqrt{2}$, $\beta=-\log(\sqrt{2}+1)$, and $\zeta^2=\pi/8$. Similar approximation can be applied for activation $v(x)=\tanh(x)$ since $\tanh(x)=2\sigma(2x)-1$.

If the ReLU activation $v(x)=\max(0,x)$ is used, we can use the techniques in \cite{clark1961greatest} to obtain the first two moments of $\max(z_1,z_2)$ where $z_1$ and $z_2$ are Gaussian random variables. Full derivation for $v(x)=\sigma(x)$, $v(x)=\tanh(x)$, and $v(x)=\max(0,x)$ is left to the supplementary material.

\textbf{Error}: With $\oo_c^{(L)}$ and $\oo_d^{(L)}$ in the last layer, we can then compute the error $E$ as the KL divergence $\mbox{KL}(\NM(\oo_c^{(L)},\mbox{diag}(\oo_d^{(L)}))\,\|\,\NM(\y_m,\mbox{diag}(\ep)))$, where $\ep$ is a vector with all entries equal to a small value $\epsilon$. Hence the error
$
E=\frac{1}{2}(\frac{\ep}{\oo_d^{(L)}}\1^T+(\frac{1}{\oo_d^{(L)}})(\oo_c^{(L)}-\y)^T-K
+(\log\oo_d^{(L)})\1^T-K\log\epsilon).
$

For classification tasks, cross-entropy loss is used. Following the computation flow above, BP can be used to learn $\W_c^{(l)}$, $\W_d^{(l)}$, $\b_c^{(l)}$, and $\b_d^{(l)}$.

\subsection{Poisson NPN}\label{sec:poisson}%\footnote{context link forward and backward}
The Poisson distribution, as another member of the exponential family, is often used to model counts (e.g., counts of words, topics, or super topics in documents). Hence for text modeling, it is natural to assume Poisson distributions for neurons in NPN. Interestingly, this design of Poisson NPN can be seen as a neural analogue of some Poisson factor analysis models \cite{DBLP:journals/jmlr/ZhouHDC12}.

Besides closed-form nonlinear transformation, another challenge of Poisson NPN is to map the \emph{pair} $(\oo_m^{(l)},\oo_s^{(l)})$ to the \emph{single} parameter $\oo_c^{(l)}$ of Poisson distributions. According to the central limit theorem, we have $\oo_c^{(l)} = \frac{1}{4}(2\oo_m^{(l)}-1+\sqrt{(2\oo_m^{(l)}-1)^2+8\oo_s^{(l)}})$ (see Section \ref{sec:poisson_mapping} and \ref{sec:poisson_full} of the supplementary material for proofs, justifications, and detailed derivation of Poisson NPN).

\begin{figure*}[!tb]
\begin{center}
\vskip -0.15in
%\framebox[4.0in]{$\;$}
%\includegraphics[height=5cm]{likeli1.eps}
\subfigure{
\includegraphics[height=2.2cm]{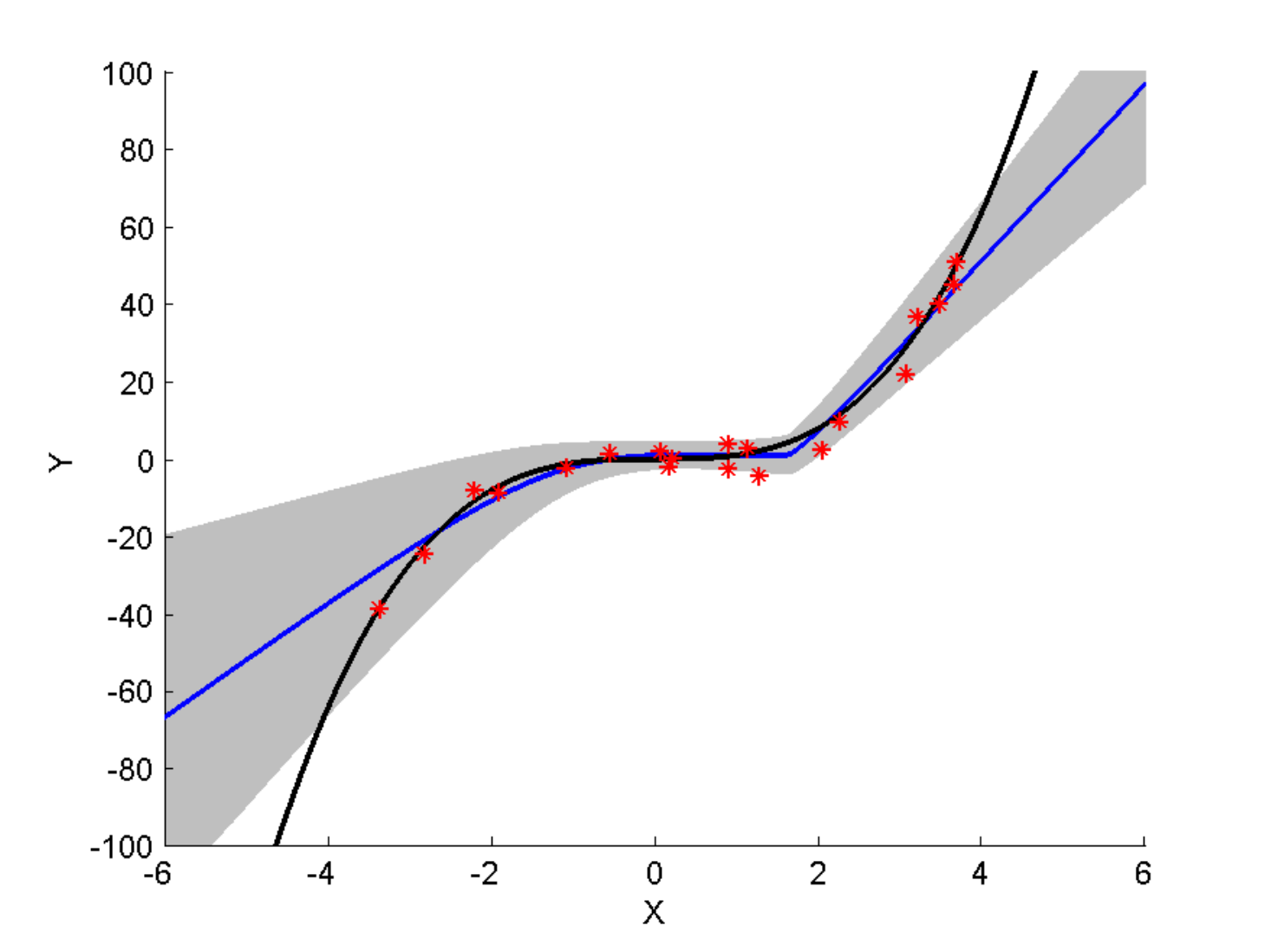}}
\hspace{-0.04in}
\subfigure{
\includegraphics[height=2.2cm]{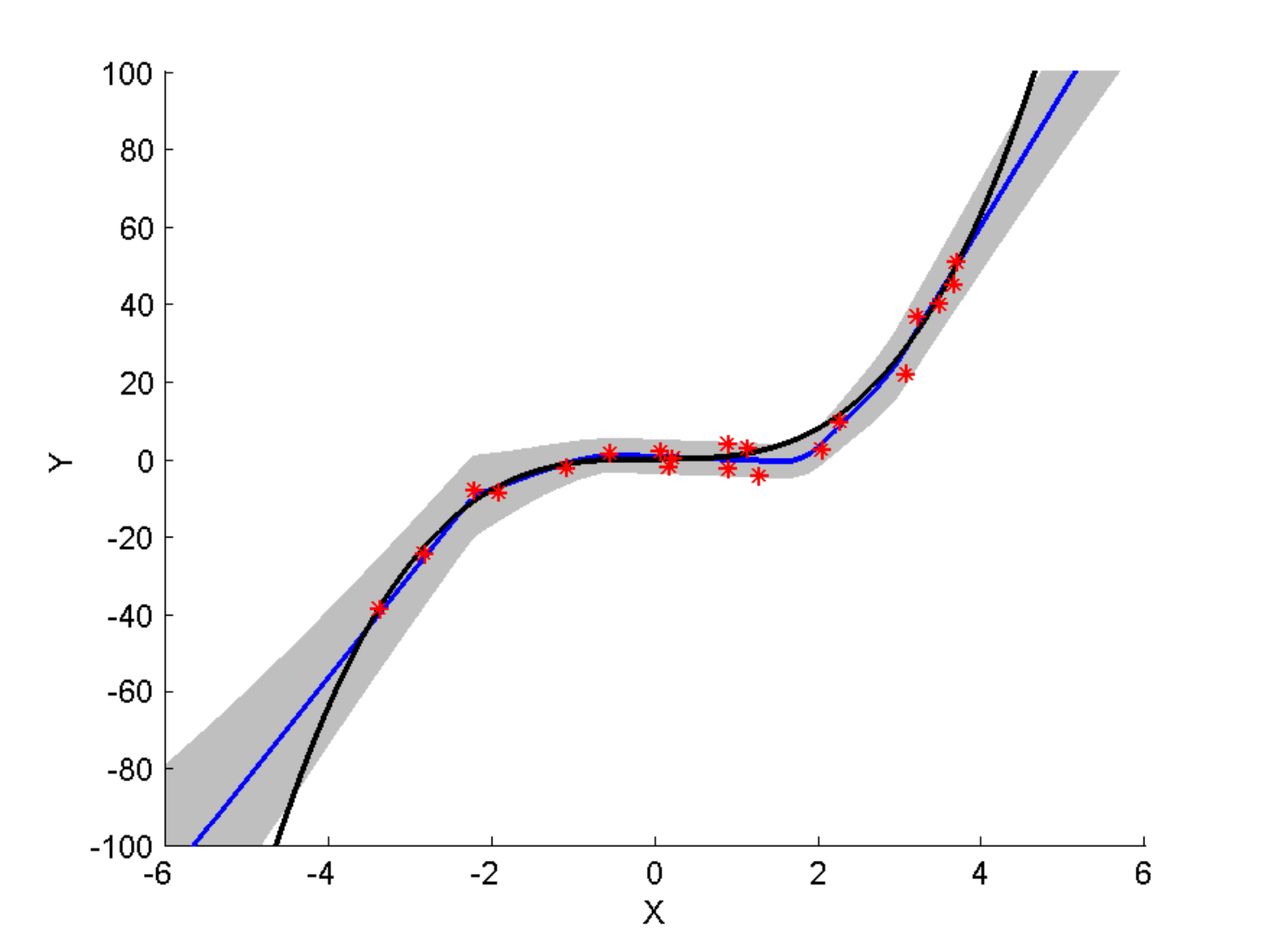}}
\hspace{-0.04in}
\subfigure{
\includegraphics[height=2.2cm]{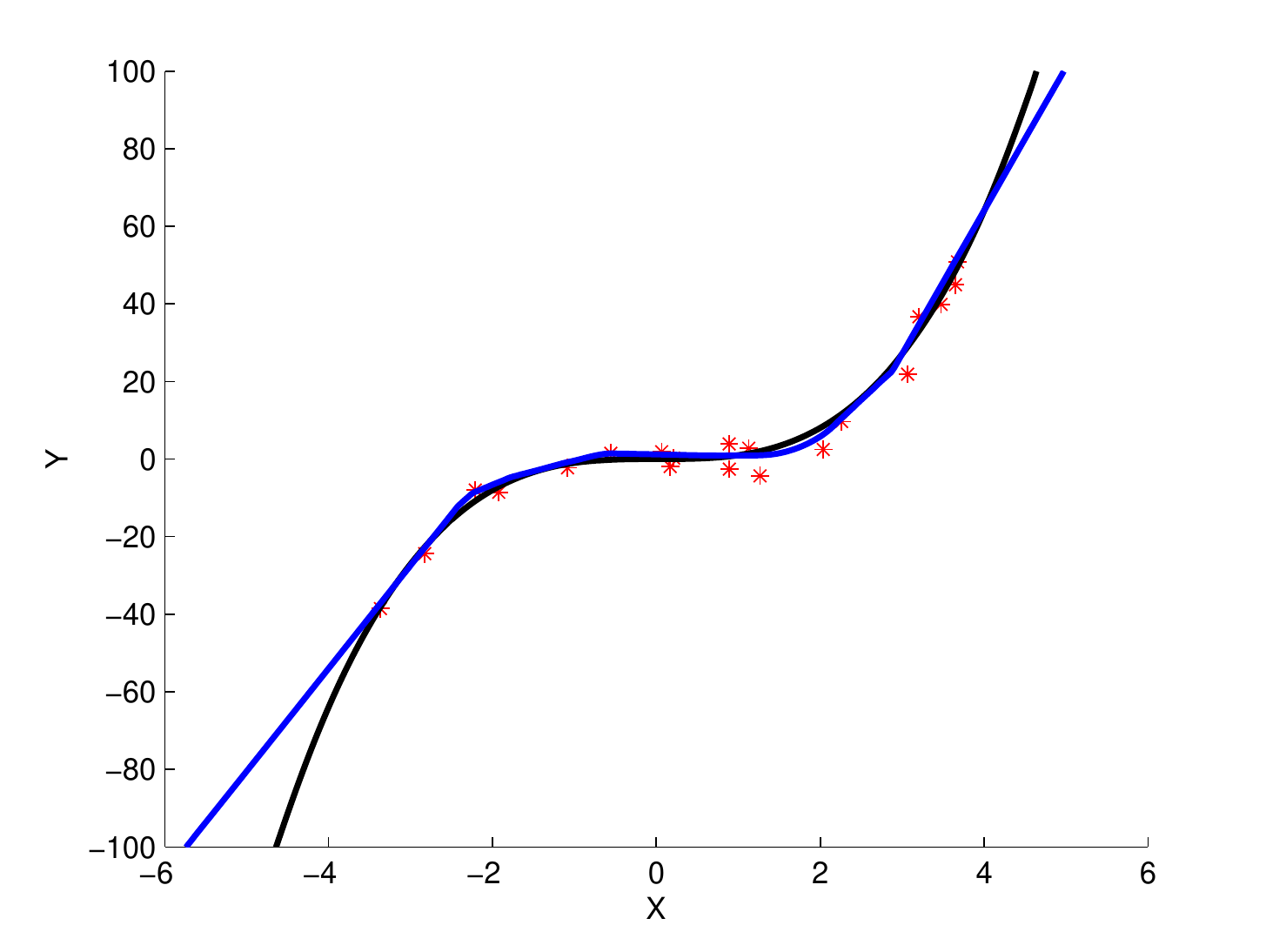}}
\hspace{-0.04in}
\subfigure{
\includegraphics[height=2.2cm]{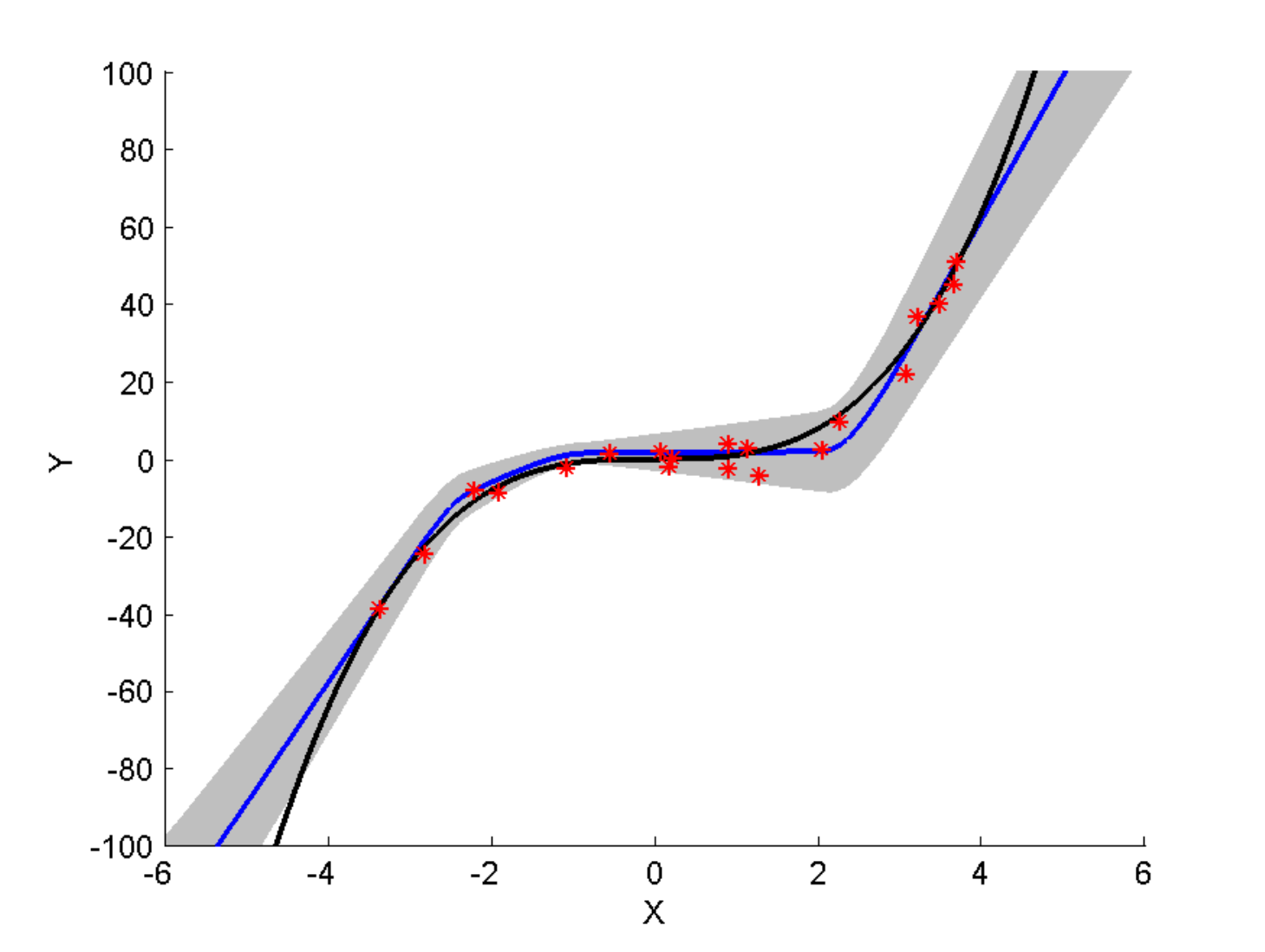}}
\end{center}
\vskip -0.25in
\caption{Predictive distributions for PBP, BDK, dropout NN, and NPN. The shaded regions correspond to $\pm 3$ standard deviations. The black curve is the data-generating function and blue curves show the mean of the predictive distributions. Red stars are the training data.
}
\vskip -0.0in
\label{fig:regression}
\vskip -0.30in
\end{figure*}

\section{Experiments}\label{sec:exp}
In this section we evaluate variants of NPN and other state-of-the-art methods on four real-world datasets. We use Matlab (with GPU) to implement NPN, AE variants, and the `vanilla' NN trained with dropout SGD (dropout NN). For other baselines, we use the Theano library \cite{Bastien-Theano-2012} and MXNet \cite{DBLP:journals/corr/ChenLLLWWXXZZ15}.%\footnote{*** Cite references if space permits.}

\subsection{Toy Regression Task}
To gain some insights into NPN, we start with a toy 1d regression task so that the predicted mean and variance can be visualized. Following \cite{balan2015bayesian}, we generate $20$ points in one dimension from a uniform distribution in the interval $[-4,4]$. The target outputs are sampled from the function $y=x^3+\epsilon_n$, where $\epsilon_n\sim\NM(0,9)$. We fit the data with the Gaussian NPN, BDK, and PBP (see the supplementary material for detailed hyperparameters). Figure \ref{fig:regression} shows the predicted mean and variance of NPN, BDK, and PBP along with the mean provided by the dropout NN (for larger versions of figures please refer to the end of the supplementary materials). As we can see, the variance of PBP, BDK, and NPN diverges as $x$ is farther away from the training data.
Both NPN's and BDK's predictive distributions are accurate enough to keep most of the $y=x^3$ curve inside the shaded regions with relatively low variance.
An interesting observation is that the training data points become more scattered when $x>0$. Ideally, the variance \emph{should start diverging from $x=0$}, which is what happens in NPN. However, PBP and BDK are \emph{not sensitive enough} to capture this dispersion change. In another dataset, Boston Housing, the root mean square error for PBP, BDK, and NPN is $3.01$, $2.82$, and $2.57$.

\begin{table*}[!tb]
\begin{small}
\centering
\vskip -0.1cm
\caption{Test Error Rates on MNIST}\label{table:compare}
\vskip -0.2cm
\begin{tabular}{|l|c|c|c|c|c|c|} \hline
Method  & BDK  & BBB & Dropout1 & Dropout2 & gamma NPN & Gaussian NPN\\ \hline
Error &1.38\%	&1.34\%&	1.33\% & 1.40\% & 1.27\%& \textbf{1.25\%}\\ \hline
\end{tabular}
\vskip -0.5cm
\end{small}
\end{table*}

\subsection{MNIST Classification}\label{sec:mnist}
The MNIST digit dataset consists of 60,000 training images and 10,000 test images. All images are labeled as one of the 10 digits. We train the models with 50,000 images and use 10,000 images for validation. Networks with a structure of 784-800-800-10 are used for all methods, since $800$ works best for the dropout NN (denoted as Dropout1 in Table \ref{table:compare}) and BDK (BDK with a structure of 784-400-400-10 achieves an error rate of $1.41\%$). We also try the dropout NN with twice the number of hidden neurons (Dropout2 in Table~\ref{table:compare}) for fair comparison.  For BBB, we directly quote their results from \cite{DBLP:conf/icml/BlundellCKW15}. We implement BDK and NPN using the same hyperparameters as in \cite{balan2015bayesian} whenever possible. Gaussian priors are used for NPN (see the supplementary material for detailed hyperparameters).

\begin{table}[!tb]
\begin{small}
\centering
\vskip -0.1cm
\caption{Test Error Rates for Different Size of Training Data}\label{table:size}
\vskip 0.05cm
\begin{tabular}{|l|c|c|c|c|c|} \hline
Size  & 100 & 500 & 2,000 & 10,000 \\ \hline
NPN &\textbf{29.97\%}	&\textbf{13.79\%}&	\textbf{7.89\%} & \textbf{3.28\%}\\ \hline
Dropout & 32.58\% &	15.39\%	&8.78\% & 3.53\% \\ \hline
BDK & 30.08\% &	14.34\%	&8.31\% & 3.55\% \\ \hline
\end{tabular}
\vskip -0.6cm
\end{small}
\end{table}

\begin{wrapfigure}{R}{0.25\textwidth}
\centering
\vskip -0.25in
\includegraphics[width=0.23\textwidth]{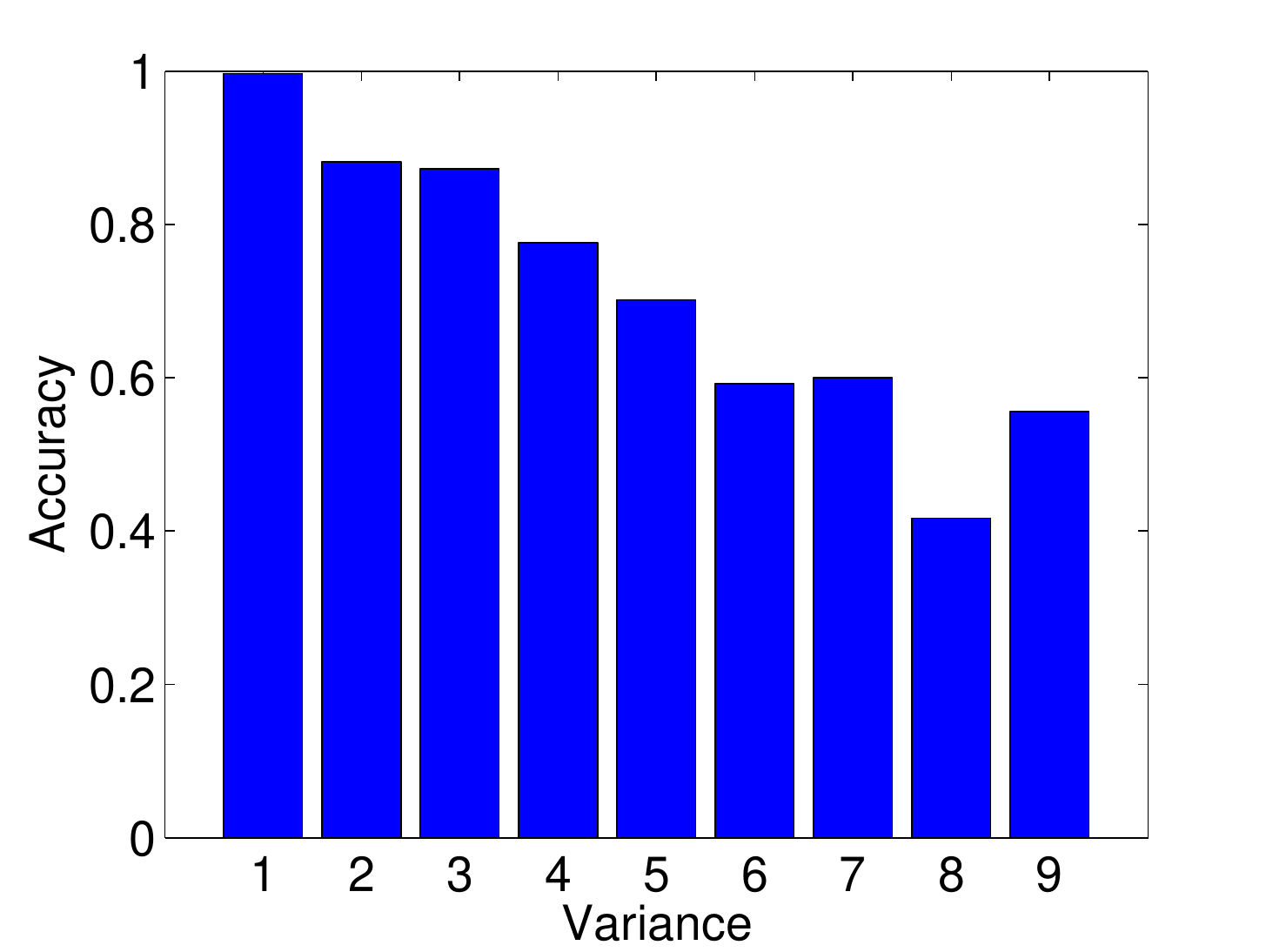}
\vskip -0.2cm
\captionsetup{font={scriptsize}}
\caption{\label{fig:acc_var}Classification accuracy for different variance (uncertainty). Note that `1' in the x-axis means $\a_s^{(L)}\1^T\in [0,0.04)$, `2' means $\a_s^{(L)}\1^T\in [0.04,0.08)$, etc.}
\vskip -0.4cm
\end{wrapfigure}

As shown in Table \ref{table:compare}, BDK and BBB achieve comparable performance with dropout NN (similar to \cite{balan2015bayesian}, PBP is not included in the comparison since it supports regression only), and gamma NPN slightly outperforms dropout NN. Gaussian NPN is able to achieve a lower error rate of $1.25\%$. Note that BBB with Gaussian priors can only achieve an error rate of $1.82\%$; $1.34\%$ is the result of using Gaussian mixture priors. For reference, the error rate for dropout NN with $1600$ neurons in each hidden layer is $1.40\%$. The time cost per epoch is $18.3$s, $16.2$s, and $6.4$s for NPN, BDK, NN respectively. Note that BDK is in C++ and NPN is in Matlab.

To evaluate NPN's ability \emph{as a Bayesian treatment to avoid overfitting}, we vary the size of the training set (from 100 to 10,000 data points) and compare the test error rates. As shown in Table \ref{table:size}, the margin between the Gaussian NPN and dropout NN increases as the training set shrinks. Besides, to verify the \emph{effectiveness of the estimated uncertainty}, we split the test set into $9$ subsets according NPN's estimated variance (uncertainty) $\a_s^{(L)}\1^T$ for each sample and show the accuracy for each subset in Figure \ref{fig:acc_var}. We can find that the more uncertain NPN is, the lower the accuracy, indicating that the estimated uncertainty is well calibrated.

\subsection{Second-Order Representation Learning}
Besides classification and regression, we also consider the problem of unsupervised representation learning with a subsequent link prediction task. Three real-world datasets, \emph{Citeulike-a}, \emph{Citeulike-t}, and \emph{arXiv}, are used. The first two datasets are from \cite{DBLP:conf/kdd/WangB11,DBLP:conf/ijcai/WangCL13}, collected separately from CiteULike in different ways to mimic different real-world settings. The third one is from arXiv as one of the SNAP datasets \cite{snapnets}. \emph{Citeulike-a} consists of 16,980 documents, 8,000 terms, and 44,709 links (citations). \emph{Citeulike-t} consists of 25,975 documents, 20,000 terms, and 32,565 links. The last dataset, \emph{arXiv}, consists of 27,770 documents, 8,000 terms, and 352,807 links.

The task is to perform unsupervised representation learning before feeding the extracted representations (middle-layer neurons) into a Bayesian LR algorithm \cite{Bishop2006}. We use the stacked autoencoder (SAE) \cite{Goodfellow-et-al-2016-Book}, stacked denoising autoencoder (SDAE) \cite{DBLP:journals/jmlr/VincentLLBM10}, variational autoencoder (VAE) \cite{DBLP:journals/corr/KingmaW13} as baselines (hyperparameters like weight decay and dropout rate are chosen by cross validation). As in SAE, we use different variants of NPN to form autoencoders where both the input and output targets are bag-of-words (BOW) vectors for the documents. The network structure for all models is $B$-100-50 ($B$ is the number of terms). Please refer to the supplementary material for detailed hyperparameters.

%\FloatBarrier
\begin{wrapfigure}{R}{0.23\textwidth}[!h]
\centering
\vskip -0.33in
\includegraphics[width=0.25\textwidth]{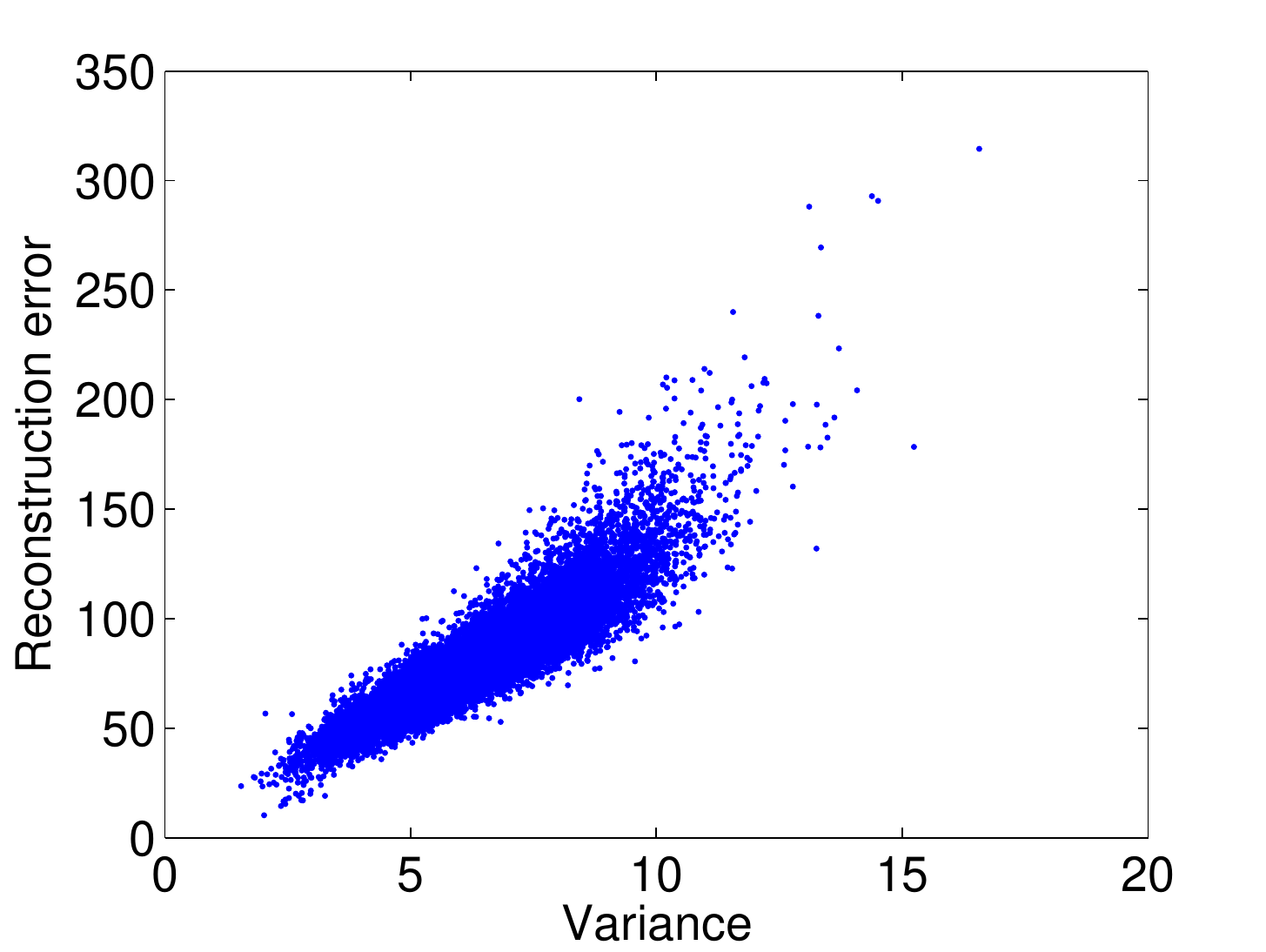}
\vskip -0.2cm
\captionsetup{font={scriptsize}}
\caption{\label{fig:rerr_var}Reconstruction error and estimated uncertainty for each data point in \emph{Citeulike-a}.}
\vskip -0.4cm
\end{wrapfigure}

One major advantage of NPN over SAE and SDAE is that the learned representations are \emph{distributions} instead of \emph{point estimates}. Since representations from NPN contain both the \emph{mean and variance}, we call them second-order representations. Note that although VAE also produces second-order representations, the variance part is simply parameterized by multilayer perceptrons while NPN's variance is naturally computed through propagation of distributions. These $50$-dimensional representations with both mean and variance are fed into a Bayesian LR algorithm for link prediction (for deterministic AE the variance is set to $0$).

\begin{table*}[!tb]
\centering
\begin{scriptsize}
\vskip -0.10in
\caption{Link Rank on Three Datasets}\label{table:lr}
\vskip -0.2cm
\begin{tabular}{|l|rrr|rrr|} \hline
 Method  & SAE & SDAE & VAE   & gamma NPN & Gaussian NPN & Poisson NPN \\ \hline
\emph{Citeulike-a} &1104.7 &992.4	&980.8&851.7~~~(935.8)&	750.6~~~(823.9) & \textbf{690.9~~~(5389.7)}\\ %\hline
\emph{Citeulike-t} &2109.8&1356.8	&1599.6&1342.3 (1400.7)&	\textbf{1280.4 (1330.7)} & 1354.1~~~(9117.2)\\ %\hline
\emph{arXiv} &4232.7&2916.1	&3367.2&2796.4 (3038.8) &	2687.9 (2923.8) & \textbf{2684.1 (10791.3)}\\ \hline
\end{tabular}
\end{scriptsize}
\vskip -0.7cm
\end{table*}

We use links among $80\%$ of the nodes (documents) to train the Bayesian LR and use other links as the test set. \emph{link rank} and \emph{AUC} (area under the ROC curve) are used as evaluation metrics. The link rank is the average rank of the observed links from test nodes to training nodes. We compute the AUC for every test node and report the average values. By definition, lower link rank and higher AUC indicate better predictive performance and imply more powerful representations.

Table \ref{table:lr} shows the link rank for different models. For fair comparison we also try all baselines with \emph{double budget} (a structure of $B$-200-50) and report whichever has higher accuracy. As we can see, by treating representations as distributions rather than points in a vector space, NPN is able to achieve much lower link rank than all baselines, including VAE with variance information. The numbers in the brackets show the link rank of NPN if we discard the variance information. The performance gain from variance information \emph{verifies the effectiveness of the variance (uncertainty) estimated by NPN}. Among different variants of NPN, the Gaussian NPN seems to perform better in datasets with fewer words like \emph{Citeulike-t} (only $18.8$ words per document). The Poisson NPN, as a more natural choice to model text, achieves the best performance in datasets with more words (\emph{Citeulike-a} and \emph{arXiv}). The performance in AUC is consistent with that in terms of the link rank (see Section \ref{sec:auc} of the supplementary material). To further verify the effectiveness of the estimated uncertainty, we plot the reconstruction error and the variance $\oo_s^{(L)}\1^T$ for each data point of \emph{Citeulike-a} in Figure~\ref{fig:rerr_var}. As we can see, higher uncertainty often indicates not only \emph{higher reconstruction error $E$} but also \emph{higher variance in $E$}.

\section{Conclusion}
We have introduced a family of models, called natural-parameter networks, as a novel class of probabilistic NN to combine the merits of NN and PGM. NPN regards the weights and neurons as arbitrary exponential-family distributions rather than just point estimates or factorized Gaussian distributions. Such flexibility enables richer descriptions of hierarchical relationships among latent variables and adds another degree of freedom to customize NN for different types of data. Efficient sampling-free backpropagation-compatible algorithms are designed for the learning of NPN. Experiments show that NPN achieves state-of-the-art performance on classification, regression, and representation learning tasks. As possible extensions of NPN, it would be interesting to connect NPN to arbitrary PGM to form fully Bayesian deep learning models \cite{CDL,BDL}, allowing even richer descriptions of relationships among latent variables. It is also worth noting that NPN cannot be defined as generative models and, unlike PGM, the same NPN model cannot be used to support multiple types of inference (with different observed and hidden variables). We will try to address these limitations in our future work.

\appendix
\section{Proof of Theorem \ref{th:moment}}
\begin{theorem}\label{th:moment}
Assume an exponential-family distribution $p_o(x|\et) = h(x)g(\et)\exp\{\et^Tu(x)\}$, where the vector $u(x)=(u_1(x),u_2(x),\dots,u_M(x))^T$ (M is the number of natural parameters). If activation function $v(x)=r-q\exp(-\tau u_i(x))$ is used, the first two moments of $v(x)$, $\int p_o(x|\et)v(x)dx$ and $\int p_o(x|\et)v(x)^2dx$, can be expressed in closed form. Here $i\in\{1,2,\dots,M\}$ and $r$, $q$, and $\tau$ are constants.
\end{theorem}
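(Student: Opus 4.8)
The plan is to exploit the defining algebraic feature of the exponential family: multiplying the density $p_o(x|\et) = h(x)g(\et)\exp\{\et^Tu(x)\}$ by a factor of the form $\exp(-\tau u_i(x))$ shifts only the $i$-th natural parameter while leaving the sufficient statistics $u(x)$ and the base measure $h(x)$ untouched. Concretely, $\et^Tu(x) - \tau u_i(x) = \widetilde{\et}^Tu(x)$, where $\widetilde{\et}$ agrees with $\et$ in every coordinate except the $i$-th, which becomes $\eta_i - \tau$. Hence the integral of the shifted integrand, up to the normalizer, is exactly one, and the whole effect of the shift is captured by the ratio of normalizers $g(\et)/g(\widetilde{\et})$.

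First I would handle the first moment. Splitting $v(x) = r - q\exp(-\tau u_i(x))$ linearly, the constant term integrates to $r$ because $p_o$ is a normalized density. For the second term I would factor out $g(\et)$, insert $g(\widetilde{\et})/g(\widetilde{\et})$, and recognize that $\int h(x)g(\widetilde{\et})\exp\{\widetilde{\et}^Tu(x)\}\,dx = 1$, giving $E(v(x)) = r - q\,g(\et)/g(\widetilde{\et})$. This is closed form because $g(\cdot)$ is a known elementary function for any named exponential-family distribution.

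For the second moment I would expand $v(x)^2 = r^2 - 2rq\exp(-\tau u_i(x)) + q^2\exp(-2\tau u_i(x))$ and apply the same shift trick term by term. The cross term reuses the shift by $-\tau$ (hence $\widetilde{\et}$), while the quadratic term corresponds to a shift by $-2\tau$, which I would encode as $\widehat{\et}$ (its $i$-th coordinate becomes $\eta_i - 2\tau$). Collecting the pieces yields $E(v(x)^2) = r^2 - 2rq\,g(\et)/g(\widetilde{\et}) + q^2\,g(\et)/g(\widehat{\et})$, again closed form.

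The main obstacle I anticipate is not the algebra but a domain condition: the manipulation is valid only when the shifted vectors $\widetilde{\et}$ and $\widehat{\et}$ lie inside the natural-parameter space of the family, so that $g(\widetilde{\et})$ and $g(\widehat{\et})$ are finite and the shifted densities are genuinely normalizable. For a distribution such as the gamma, whose natural parameter is $-d$, this forces $d + \tau > 0$, i.e.\ a sign constraint on $\tau$; this is precisely the kind of restriction that the remark following the theorem -- choosing the constants so that $v(x)$ is strictly increasing and bounded -- must respect. I would therefore state this admissibility condition explicitly and check it against each entry of Table~\ref{table:activation} before concluding.
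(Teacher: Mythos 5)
Your proof is correct and takes essentially the same route as the paper's: the same linear expansion of $v(x)$ and $v(x)^2$, the same shifted parameter vectors $\widetilde{\et}$ and $\widehat{\et}$, and the same ratio-of-normalizers evaluation $g(\et)/g(\widetilde{\et})$, $g(\et)/g(\widehat{\et})$. Your explicit admissibility condition---that $\widetilde{\et}$ and $\widehat{\et}$ must lie in the natural-parameter space so the normalizers are finite---is left implicit in the paper and is a worthwhile refinement.
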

\begin{proof}
We first let $\et=(\eta_1,\eta_2,\dots,\eta_M)$, $\widetilde{\et}=(\eta_1,\eta_2,\dots,\eta_i-\tau,\dots,\eta_M)$, and $\widehat{\et}=(\eta_1,\eta_2,\dots,\eta_i-2\tau,\dots,\eta_M)$. The first moment of $v(x)$ is
\begin{align*}
E(v(x))&=\int p_o(x|\et)(r-q\exp(-\tau u_i(x)))dx\\
&=r-q\int h(x)g(\et)\exp\{\et^Tu(x)-\tau u_i(x)\}dx\\
&=r-q\int h(x)\frac{g(\et)}{g(\widetilde{\et})}g(\widetilde{\et})\exp\{\widetilde{\et}^Tu(x)\} dx\\
&=r-q\frac{g(\et)}{g(\widetilde{\et})}\int h(x)g(\widetilde{\et})\exp\{\widetilde{\et}^Tu(x)\} dx\\
&=r-q\frac{g(\et)}{g(\widetilde{\et})}.
\end{align*}
Similarly the second moment
\begin{align*}
E(v(x)^2)&=\int p_o(x|\et)(r-q\exp(-\tau u_i(x)))^2dx\\
&=\int p_o(x|\et)(r^2+q^2\exp(-2\tau u_i(x))-2rq\exp(-\tau u_i(x)))dx\\
&=r^2\int p_o(x|\et)dx+q^2\int h(x)g(\et)\exp\{\et^Tu(x)-2\tau u_i(x)\}dx\\
&~~~~-2rq\int h(x)g(\et)\exp\{\et^Tu(x)-\tau u_i(x)\}dx\\
&=r^2+q^2\int h(x)g(\et)\exp\{\widehat{\et}^Tu(x)\}dx-2rq\int h(x)g(\et)\exp\{\widetilde{\et}^Tu(x)\}dx \\
&=r^2+q^2\int h(x)\frac{g(\et)}{g(\widehat{\et})}g(\widehat{\et})\exp\{\widetilde{\et}^Tu(x)\} dx -2rq\int h(x)\frac{g(\et)}{g(\widetilde{\et})}g(\widetilde{\et})\exp\{\widetilde{\et}^Tu(x)\} dx\\
&=r^2+q^2\frac{g(\et)}{g(\widehat{\et})}\int h(x)g(\widehat{\et})\exp\{\widetilde{\et}^Tu(x)\} dx -2rq\frac{g(\et)}{g(\widetilde{\et})}\int h(x)g(\widetilde{\et})\exp\{\widetilde{\et}^Tu(x)\} dx\\
&=r^2+q^2\frac{g(\et)}{g(\widehat{\et})}-2rq\frac{g(\et)}{g(\widetilde{\et})}.
\end{align*}
\end{proof}

\begin{figure*}[!tb]
\begin{center}
%\framebox[4.0in]{$\;$}
%\includegraphics[height=5cm]{likeli1.eps}
\subfigure{
\includegraphics[height=3.2cm]{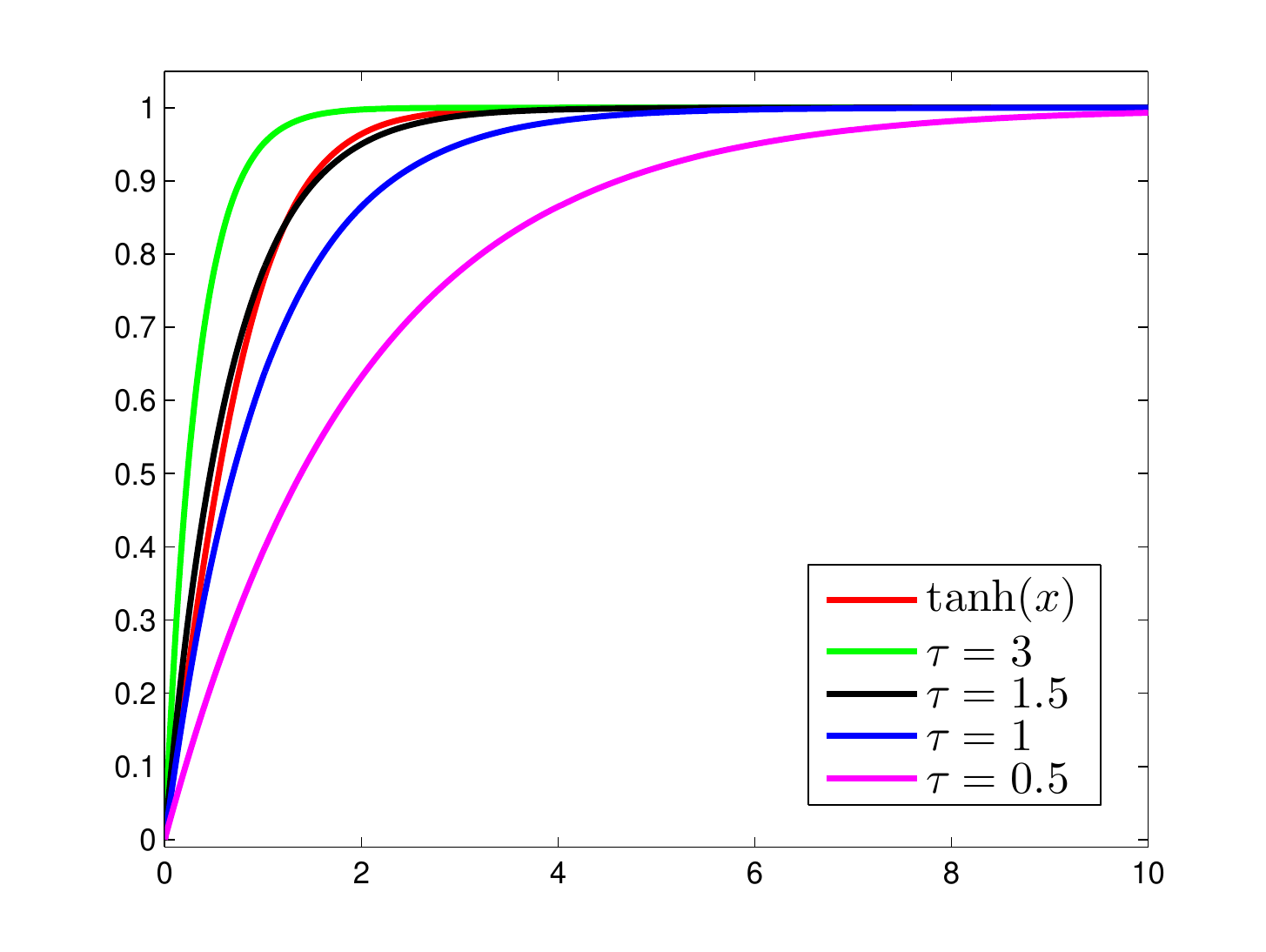}}
\hspace{-0.0in}
\subfigure{
\includegraphics[height=3.2cm]{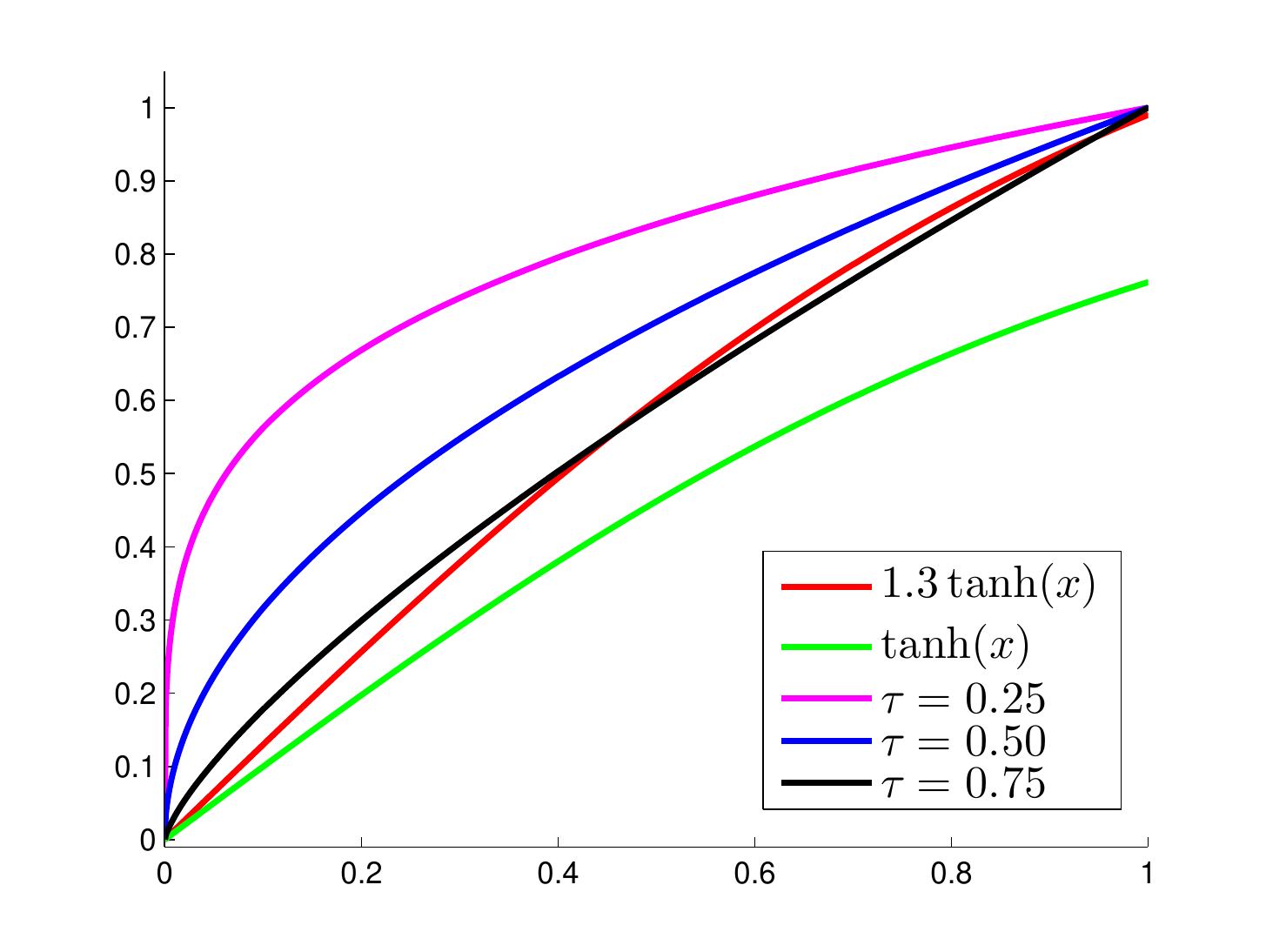}}
\hspace{-0.0in}
\subfigure{
\includegraphics[height=3.2cm]{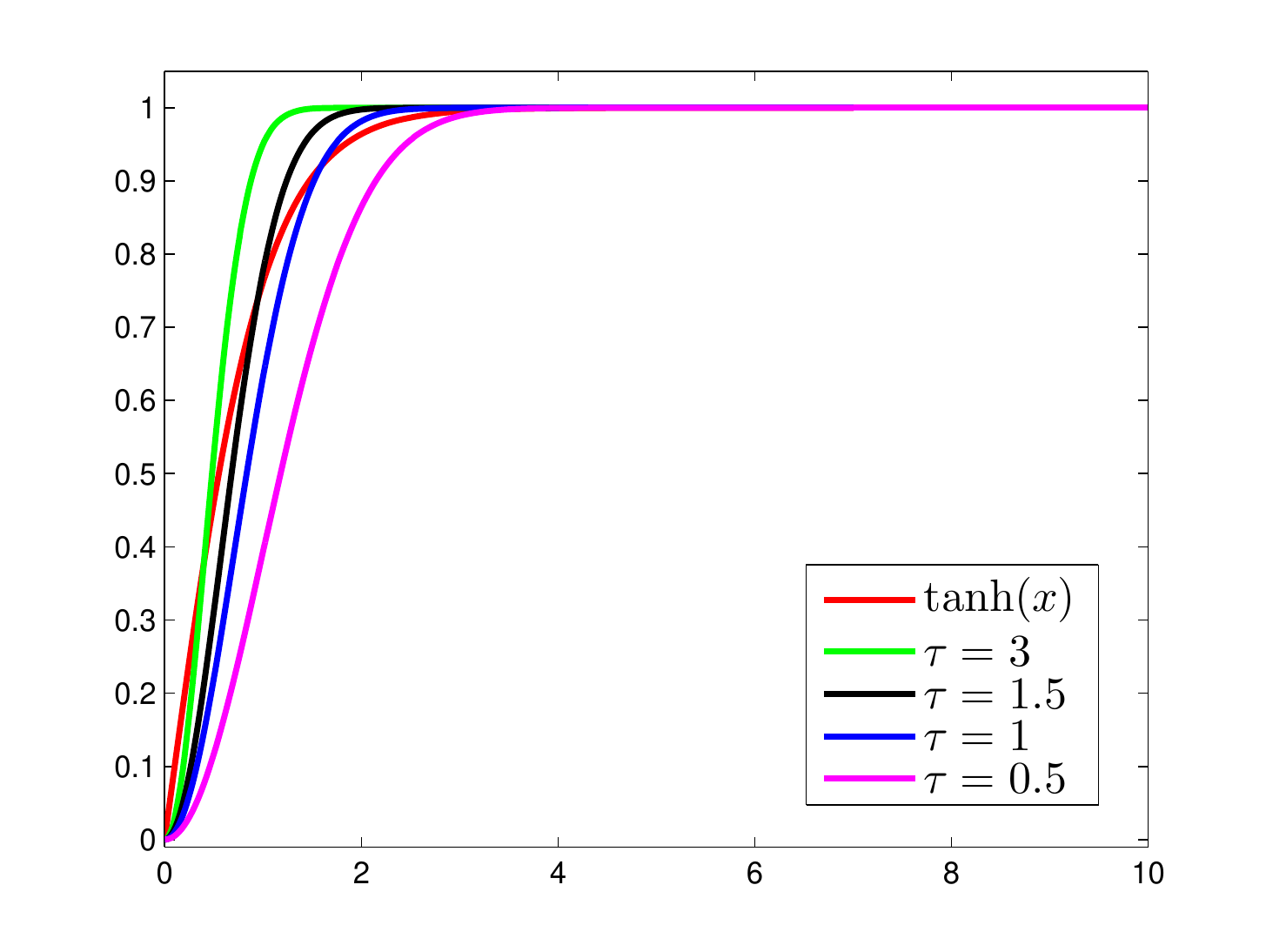}}
\end{center}
\vskip -0.3in
\caption{Activation functions for the gamma distribution (left), the beta distribution (middle), and the Rayleigh distribution (right).
}
\vskip -0.0in
\label{fig:act}
\vskip -0.25in
\end{figure*}

\section{Exponential-Family Distributions and Activation Functions}\label{sec:activation}
In this section we provide a list of exponential-family distributions with corresponding activation functions that could lead to close-form expressions of the first two moments of $v(x)$, namely $E(v(x))$ and $E(v(x)^2)$. With Theorem \ref{th:moment}, we only need to find the constants ($r$, $q$, and $\tau$) that make $v(x)=r-q\exp(-\tau u_i(x))$ monotonically increasing and bounded.

As mentioned in the paper, we use the activation function $v(x)=r(1-\exp(-\tau x))$ for the gamma NPN and the Poisson NPN. Figure \ref{fig:act}(left) plots this function with different $\tau$ when $r=1$. As we can see, this function has a similar shape with the positive half of $v(x)=\tanh(x)$ (the negative part is irrelevant because both the gamma distribution and the Poisson distribution have support over positive values only). Note that the activation function $v(x)=1-\exp(-1.5x)$ is very similar to $v(x)=\tanh(x)$.

For beta distributions, since the support set is $(0,1)$ the domain of the activation function is also $(0,1)$. In this case $v(x)=qx^{\tau}$ is a reasonable activation function when $\tau\in(0,1)$ and $q=1$. Figure \ref{fig:act}(middle) shows this function with differnt $\tau$ when $q=1$. Since we expect the nonlinearly transformed distribution to be another beta distribution, the domain of the function should be $(0,1)$ and the field should be $[0,1]$. With these criteria, $v(x)=1.3\tanh(x)$ might be a better activation function than $v(x)=\tanh(x)$. As shown in the figure, different $\tau$ leads to different shapes of the function.

For Rayleigh distributions with support over positive reals, $v(x)=r-q e^{-\tau x^2}$ is a proper activation function with the domain $x\in\mathbb{R}^+$. Figure \ref{fig:act}(right) plots this function with different $\tau$ when $r=q=1$. We can see that this function also has a similar shape with the positive half of $v(x)=\tanh(x)$.

\subsection{Gamma Distributions}\label{sec:gamma_activation}
For gamma distributions with $(v(x)=r(1-\exp(-\tau x))$, as mentioned in the paper,
\begin{align*}
\a_{m} &= \int p_o(\oo_j|\oo_{c},\oo_{d})v(\oo) {d} \oo
= r\int_0^{+\infty}\frac{1}{\Gamma(\oo_c)}\oo_d^{\oo_c}\circ\oo^{\oo_c-1}e^{-\oo_d\circ\oo}(1-e^{-\tau\oo}){d}\oo\\
&=r(1-\frac{\oo_d^{\oo_c}}{\Gamma(\oo_c)}\int_0^{+\infty}\oo^{\oo_c-1}e^{-(\oo_d+\tau)\circ\oo}{d}\oo)\\
&=r(1-\frac{\oo_d^{\oo_c}}{\Gamma(\oo_c)}\circ\Gamma(\oo_c)\circ(\oo_d+\tau)^{-\oo_c})\\
&=r(1-(\frac{\oo_d}{\oo_d+\tau})^{\oo_c}).
\end{align*}
Similarly we have
\begin{align*}
\a_s&=\int p_o(\oo_j|\oo_{c},\oo_{d})v(\oo)^2 {d} \oo-\a_m^2\\
&=r^2\int_0^{+\infty}\frac{1}{\Gamma(\oo_c)}\oo_d^{\oo_c}\circ\oo^{\oo_c-1}e^{-\oo_d\circ\oo}(1-2e^{-\tau\oo}+e^{-2\tau\oo}){d}\oo-\a_m^2\\
&=r^2(1-2\frac{\oo_d^{\oo_c}}{\Gamma(\oo_c)}\circ\Gamma(\oo_c)\circ(\oo_d+\tau)^{-\oo_c}+\frac{\oo_d^{\oo_c}}{\Gamma(\oo_c)}\circ\Gamma(\oo_c)\circ(\oo_d+2\tau)^{-\oo_c})-\a_m^2\\
&=r^2(1-2(\frac{\oo_d}{\oo_d+\tau})^{\oo_c}+(\frac{\oo_d}{\oo_d+2\tau})^{\oo_c})-\a_m^2\\
&= r^2((\frac{\oo_d}{\oo_d+2\tau})^{\oo_c}-(\frac{\oo_d}{\oo_d+\tau})^{2\oo_c}).
\end{align*}
Equivalently we can obtain the same $\a_m$ and $\a_s$ by following Theorem \ref{th:moment}. For the gamma distribution
$$p(x|c,d)=\frac{d^c}{\Gamma(c)}\exp\{(c-1)\log x+(-b)x\}.$$
Thus we have $\et=(c-1,-d)^T$, $u(x)=(\log x,x)^T$, and $g(\et)=\frac{d^c}{\Gamma(c)}$. Using $v(x)=r(1-\exp(-\tau x))$ implies $g(\widetilde{\et})=\frac{(d+\tau)^c}{\Gamma(c)}$ and $g(\widehat{\et})=\frac{(d+2\tau)^c}{\Gamma(c)}$. Hence we have
\begin{align*}
\a_m =r-r\frac{g(\et)}{g(\widetilde{\et})}
=r(1-(\frac{\oo_d}{\oo_d+\tau})^{\oo_c}),
\end{align*}
and the variance
\begin{align*}
\a_s&=r^2+r^2\frac{g(\et)}{g(\widehat{\et})}-2r^2\frac{g(\et)}{g(\widetilde{\et})}-r^2(1-\frac{g(\et)}{g(\widetilde{\et})})^2\\
&=r^2((\frac{\oo_d}{\oo_d+2\tau})^{\oo_c}-(\frac{\oo_d}{\oo_d+\tau})^{2\oo_c}).
\end{align*}

\subsection{Poisson Distributions}\label{sec:poisson_activation}
For Poisson distributions with $v(x)=r(1-\exp(-\tau x))$, using the Taylor expansion of $\exp(\exp(-\tau)\lambda)$ with respect to $\lambda$,
\begin{align*}
\exp(\exp(-\tau)\lambda)=\sum\limits_{x=0}^{+\infty}\frac{\lambda^x\exp(-\tau x)}{x!},
\end{align*}
we have
\begin{align*}
\a_m&=r\sum\limits_{x=0}^{+\infty}\frac{\oo_c^x\exp(-\oo_c)}{x!}(1-\exp(-\tau x))\\
&=r(\sum\limits_{x=0}^{+\infty}\frac{\oo_c^x\exp(-\oo_c)}{x!}-\sum\limits_{x=0}^{+\infty}\frac{\oo_c^x\exp(-\oo_c)}{x!}\exp(-\tau x))\\
&=r(1-\exp(-\oo_c)\sum\limits_{x=0}^{+\infty}\frac{\oo_c^x\exp(-\tau x)}{x!})\\
&=r(1-\exp(-\oo_c)\exp(\exp(-\tau)\oo_c))\\
&=r(1-\exp((\exp(-\tau)-1)\oo_c)).
\end{align*}
Similarly, we have
\begin{align*}
\a_s&=r^2\sum\limits_{x=0}^{+\infty}\frac{\oo_c^x\exp(-\oo_c)}{x!}(1-\exp(-\tau x))^2-\a_m^2\\
&=r^2\sum\limits_{x=0}^{+\infty}\frac{\oo_c^x\exp(-\oo_c)}{x!}(1-2\exp(-\tau x)+\exp(-2\tau x))-\a_m^2\\
&=r^2(\sum\limits_{x=0}^{+\infty}\frac{\oo_c^x\exp(-\oo_c)}{x!}-2\sum\limits_{x=0}^{+\infty}\frac{\oo_c^x\exp(-\oo_c)}{x!}\exp(-\tau x)+\sum\limits_{x=0}^{+\infty}\frac{\oo_c^x\exp(-\oo_c)}{x!}\exp(-2\tau x))-\a_m^2\\
&=r^2(\exp((\exp(-2\tau)-1)\oo_c)-\exp(2(\exp(-\tau)-1)\oo_c).
\end{align*}
Equivalently we can follow Theorem \ref{th:moment} to obtain $\a_m$ and $\a_s$. For the Poisson distribution
$$p(x|c)=\frac{1}{x!}\exp(-c)\exp\{x\log c\}$$
Thus we have $\et=\log c$, $u(x)=x$, and $g(\et)=\exp(-c)$. Using $v(x)=r(1-\exp(-\tau x))$ implies $g(\widetilde{\et})=\exp(-\exp(-\tau) c)$ and $g(\widehat{\et})=\exp(-\exp(-2\tau) c)$. Hence we have
\begin{align*}
\a_m &=r-r\frac{g(\et)}{g(\widetilde{\et})}\\
&=r(1-\exp((\exp(-\tau)-1)\oo_c)),
\end{align*}
and the variance
\begin{align*}
\a_s&=r^2+r^2\frac{g(\et)}{g(\widehat{\et})}-2r^2\frac{g(\et)}{g(\widetilde{\et})}-r^2(1-\frac{g(\et)}{g(\widetilde{\et})})^2\\
&=r^2(\exp((\exp(-2\tau)-1)\oo_c)-\exp(2(\exp(-\tau)-1)\oo_c)).
\end{align*}

\subsection{Gaussian Distributions}\label{sec:gaussian}
In this subsection, we provide detailed derivation of $(\a_m,\a_s)$ for Gaussian distributions.

\subsubsection{Sigmoid Activation}\label{sec:sigmoid}
We start by proving the following theorem:
\begin{theorem}\label{th:conv}
Consider a univariate Gaussian distribution $\NM(x|\mu,\sigma^2)$ and the probit function $\Phi(x)=\int_{-\infty}^x\NM(\theta|0,1)d\theta$. If $\zeta^2=\frac{\pi}{8}$, for any constants $a$ and $b$, the following equation holds:
\begin{align}
\int \Phi(\zeta a(x+b)\NM(x|\mu,\sigma^2)d x=\Phi(\frac{\zeta a(\mu+b)}{(1+\zeta^2a^2\sigma^2)^{\frac{1}{2}}})\label{eq:conv}.
\end{align}
\end{theorem}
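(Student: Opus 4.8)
The plan is to interpret the left-hand side probabilistically rather than to grind through the Gaussian integral directly. First I would introduce an auxiliary standard-normal variable $Z\sim\NM(0,1)$ taken independent of $X\sim\NM(\mu,\sigma^2)$, and recall that by the very definition of the probit function, $\Phi(\zeta a(x+b))=P(Z\le \zeta a(x+b))$. Writing the integral as an expectation over $X$ then gives
\[
\int \Phi(\zeta a(x+b))\,\NM(x\,|\,\mu,\sigma^2)\,dx
= E_X\!\left[\Phi(\zeta a(X+b))\right]
= E_X\!\left[P\big(Z\le \zeta a(X+b)\,\big|\,X\big)\right]
= P\big(Z\le \zeta a(X+b)\big),
\]
where the last probability is taken jointly over the pair $(X,Z)$. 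Since the inner integrand is a bounded indicator, Tonelli's theorem justifies this exchange of integration with no convergence concerns.

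Next I would reduce the bivariate event to a univariate one. The event $\{Z\le \zeta a(X+b)\}$ is identical to $\{Z-\zeta a X\le \zeta a b\}$, so everything hinges on the law of the single variable $Y:=Z-\zeta a X$. Because $Z$ and $X$ are independent Gaussians, $Y$ is Gaussian with $E[Y]=-\zeta a\mu$ and $\mathrm{Var}(Y)=1+\zeta^2 a^2\sigma^2$. Standardizing $Y$ then yields
\[
P\big(Y\le \zeta a b\big)
=\Phi\!\left(\frac{\zeta a b-(-\zeta a\mu)}{(1+\zeta^2 a^2\sigma^2)^{1/2}}\right)
=\Phi\!\left(\frac{\zeta a(\mu+b)}{(1+\zeta^2 a^2\sigma^2)^{1/2}}\right),
\]
which is exactly the right-hand side of the stated identity. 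The only genuinely delicate point in this route is the bookkeeping in the last display: one must compute the mean and variance of $Z-\zeta a X$ correctly and keep track of the signs when standardizing, so that the numerator lands on $\mu+b$ rather than $\mu-b$ and the denominator keeps the factor $1+\zeta^2 a^2\sigma^2$.

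Two remarks are worth folding into the writeup. First, no step uses the particular value $\zeta^2=\pi/8$; the convolution identity is in fact \emph{exact} for every $\zeta$ and $a$. The choice $\zeta^2=\pi/8$ becomes relevant only afterward, when $\Phi(\zeta x)$ is used as an approximation to the logistic sigmoid $\sigma(x)$, so this theorem is precisely the exact convolution underlying the approximate expression later used for $\a_m$. Second, an alternative purely analytic proof would expand the double Gaussian integral, complete the square in $x$, and integrate; this works but is more tedious and obscures why the factor $1+\zeta^2a^2\sigma^2$ appears. I therefore expect the main (and only minor) obstacle to be computational bookkeeping of the mean, variance, and signs, rather than any conceptual difficulty, and I would favor the probabilistic argument above for its transparency.
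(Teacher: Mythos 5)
Your proof is correct, but it takes a genuinely different route from the paper. The paper proves the identity analytically: it differentiates the left-hand side with respect to $\mu$ under the integral sign, completes the square in the resulting Gaussian integral to obtain
$\frac{1}{(2\pi)^{1/2}}\frac{\zeta a}{(1+\zeta^2a^2\sigma^2)^{1/2}}\exp\{-\frac{1}{2}\frac{(\mu+b)^2\zeta^2a^2}{1+\zeta^2a^2\sigma^2}\}$,
observes that this equals the $\mu$-derivative of the right-hand side, and then fixes the constant of integration by letting $\mu\to-\infty$. Your argument instead introduces an auxiliary $Z\sim\NM(0,1)$ independent of $X\sim\NM(\mu,\sigma^2)$, rewrites the integral as $P(Z\le\zeta a(X+b))$, and reduces the event to $\{Z-\zeta aX\le\zeta ab\}$, where $Z-\zeta aX$ is Gaussian with mean $-\zeta a\mu$ and variance $1+\zeta^2a^2\sigma^2$; standardizing gives the result in one line. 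Your route is shorter, avoids both the completing-the-square computation and the boundary-condition argument, and makes the provenance of the factor $1+\zeta^2a^2\sigma^2$ transparent (it is simply the variance of $Z-\zeta aX$); the paper's route is self-contained calculus and requires no probabilistic reinterpretation of $\Phi$. Your side remark is also well taken and worth keeping: the identity is exact for every $\zeta$ and $a$ (the hypothesis $\zeta^2=\pi/8$ is never used in either proof), and that particular value matters only downstream, when $\Phi(\zeta x)$ is used as a surrogate for the logistic sigmoid in Equations (\ref{eq:conv})--(\ref{eq:probit_sq}) of the main text.
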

\begin{proof}
Making the change of variable $x=\mu+\sigma z$, we have
\begin{align*}
\mathcal{I}&=\int \Phi(\zeta a(x+b)\NM(x|\mu,\sigma^2)d x\\
&=\int \Phi(\zeta a(\mu+\sigma z+b))\frac{1}{(2\pi\sigma)^{\frac{1}{2}}}\exp\{-\frac{1}{2}z^2\}\sigma d z.
\end{align*}
Taking the derivative with respect to $\mu$,
\begin{align*}
\frac{\partial \mathcal{I}}{\partial \mathcal\mu}&=\frac{\zeta a}{2\pi}\int\exp\{-\frac{1}{2}z^2-\frac{1}{2}\zeta^2a^2(\mu+\sigma z+b)^2\} d z\\
&=\frac{\zeta a}{2\pi}\int\exp\{-\frac{1}{2}z^2-\frac{1}{2}\zeta^2a^2(\mu^2+\sigma^2z^2+b^2+2\mu\sigma z+2\mu b+2\sigma z b)\} d z\\
&=\frac{\zeta a}{2\pi}\int\exp\{-\frac{1}{2}(1+\zeta^2a^2\sigma^2)(z^2+\frac{2\zeta^2a^2\sigma(\mu+b)}{1+\zeta^2a^2\sigma^2}z+\frac{(\mu^2+b^2+2\mu b)\zeta^2a^2}{1+\zeta^2a^2\sigma^2})\} d z\\
&=\frac{\zeta a}{2\pi}\int\exp\{-\frac{1}{2}(1+\zeta^2a^2\sigma^2)((z+\frac{\zeta^2a^2\sigma(\mu+b)}{1+\zeta^2a^2\sigma^2})^2-\frac{\zeta^4a^4\sigma^2(\mu+b)^2}{(1+\zeta^2a^2\sigma^2)^2}+\frac{(\mu+b)^2\zeta^2a^2}{1+\zeta^2a^2\sigma^2})\} d z\\
&=\frac{\zeta a}{2\pi}\int\exp\{-\frac{1}{2}(1+\zeta^2a^2\sigma^2)((z+\frac{\zeta^2a^2\sigma(\mu+b)}{1+\zeta^2a^2\sigma^2})^2+\frac{1}{2}\frac{\zeta^4a^4\sigma^2(\mu+b)^2}{1+\zeta^2a^2\sigma^2}-\frac{1}{2}(\mu+b)^2\zeta^2a^2)\} d z\\
&=\frac{\zeta a}{2\pi}\int\exp\{-\frac{1}{2}(1+\zeta^2a^2\sigma^2)((z+\frac{\zeta^2a^2\sigma(\mu+b)}{1+\zeta^2a^2\sigma^2})^2-\frac{1}{2}\frac{(\mu+b)^2\zeta^2a^2}{1+\zeta^2a^2\sigma^2})\} d z\\
&=\frac{1}{(2\pi)^{\frac{1}{2}}}\frac{\zeta a}{(1+\zeta^2a^2\sigma^2)^{\frac{1}{2}}}\exp\{-\frac{1}{2}\frac{(\mu+b)^2\zeta^2a^2}{1+\zeta^2a^2\sigma^2}\}.
\end{align*}
Taking derivative of the right-hand side of Equation (\ref{eq:conv}) also gives
$$\frac{1}{(2\pi)^{\frac{1}{2}}}\frac{\zeta a}{(1+\zeta^2a^2\sigma^2)^{\frac{1}{2}}}\exp\{-\frac{1}{2}\frac{(\mu+b)^2\zeta^2a^2}{1+\zeta^2a^2\sigma^2}\},$$
which means the derivatives of the left and right hand sides of Equation (\ref{eq:conv}) with respect to $\mu$ are equal. When $\mu$ approaches negative infinity, the derivatives go to zero, which implies that the constant of the integration is zero. Hence Equation (\ref{eq:conv}) holds.
\end{proof}

As mentioned in the paper (with a slight abuse of notation on $\sigma$), if the sigmoid activation $v(x)=\sigma(x)=\frac{1}{1+\exp(-x)}$ is used,
\begin{align}
\a_m&=\int \NM(\oo|\oo_c,diag(\oo_d))\circ\frac{1}{1+\exp(-\oo)}d\oo\nonumber\\
&\approx \int \NM(\oo|\oo_c,diag(\oo_d))\circ\Phi(\zeta\oo)d\oo.\label{eq:probit}
\end{align}
Following Theorem \ref{th:conv} with $a=1$ and $b=0$, we have
\begin{align}
\a_m&\approx\Phi(\frac{\oo_c}{(\zeta^{-2}+\oo_d)^{\frac{1}{2}}})\nonumber\\
&=\sigma(\frac{\oo_c}{(1+\zeta^2\oo_d)^{\frac{1}{2}}})\nonumber.
\end{align}
For the variance,
\begin{align}
\a_s&\approx\int \NM(\oo|\oo_c,diag(\oo_d))\circ\Phi(\zeta\alpha(\oo+\beta))d\oo-\a_m^2\nonumber\\
&=\sigma(\frac{\alpha(\oo_m+\beta)}{(1+\zeta^2\alpha^2\oo_s)^{1/2}})-\a_m^2. \label{eq:probit_sq}
\end{align}
Equation (\ref{eq:probit_sq}) holds due to Theorem \ref{th:conv} with $a=\alpha=4-2\sqrt{2}$ and $b=\beta=-\log(\sqrt{2}+1)$.

\subsubsection{Hyperbolic Tangent Activation}
If the tanh activation $v(x)=\tanh(x)$ is used, since $\tanh(x)=2\sigma(2x)-1$, we have
\begin{align}
\a_m&=\int \NM(\oo|\oo_c,diag(\oo_d))\circ(2\sigma(2\oo)-1)d\oo\nonumber\\
&=2\int \NM(\oo|\oo_c,diag(\oo_d))\circ\sigma(2\oo)d\oo -1 \nonumber\\
&\approx 2\int \NM(\oo|\oo_c,diag(\oo_d))\circ\Phi(2\zeta\oo)d\oo -1 \nonumber\\
&=2\Phi(\frac{2\zeta\oo_c}{(1+4\zeta^2\oo_d)^{\frac{1}{2}}})-1\label{eq:tanh_m}\\
&=2\sigma(\frac{\oo_c}{(0.25+\zeta^2\oo_d)^{\frac{1}{2}}})-1,\nonumber
\end{align}
where Equation (\ref{eq:tanh_m}) is due to Theorem \ref{th:conv} with $a=2$ and $b=0$. For the variance of $\a$:
\begin{align}
\a_s&=\int \NM(\oo|\oo_c,diag(\oo_d))\circ(2\sigma(2\oo)-1)^2d\oo-\a_m^2\nonumber\\
&=\int \NM(\oo|\oo_c,diag(\oo_d))\circ(4\sigma(2\oo)^2-4\sigma(2\oo)+1)d\oo-\a_m^2\nonumber\\
&\approx\int \NM(\oo|\oo_c,diag(\oo_d))\circ(4\Phi(\zeta\alpha(\oo+\beta))-4\sigma(2\oo)+1)d\oo-\a_m^2\nonumber\\
&=4\sigma(\frac{\alpha(\oo_c+\beta)}{(1+\zeta^2\alpha^2\oo_d)^{\frac{1}{2}}})-\a_m^2-2\a_m-1\label{eq:tanh_s},
\end{align}
where Equation (\ref{eq:tanh_s}) follows from Theorem \ref{th:conv} with $a=\alpha=8-4\sqrt{2}$ and $b=\beta=-0.5\log(\sqrt{2}+1)$.

\subsubsection{ReLU Activation}
If the ReLU activation $v(x)=\max(0,x)$ is used, we can use the techniques in \cite{clark1961greatest} to obtain the first two moments of $z=\max(z_1,z_2)$ where $z_1\sim\NM(\mu_1,\sigma_1^2)$ and $z_2\sim\NM(\mu_2,\sigma_2^2)$. Specifically,
\begin{align*}
E(z)&=\mu_1\Phi(\gamma)+\mu_2\Phi(-\gamma)+\nu\phi(\gamma)\\
E(z^2)&=(\mu_1^2+\sigma_1^2)\Phi(\gamma)+(\mu_2^2+\sigma_2^2)\Phi(-\gamma)+(\mu_1+\mu_2)\nu\phi(\gamma),
\end{align*}
where $\Phi(x)=\int_{-\infty}^x\NM(\theta|0,1)d\theta$, $\phi(x)=\NM(x|0,1)$, $\nu=\sqrt{\sigma_1^2+\sigma_2^2}$, and $\gamma=\frac{\mu_1-\mu_2}{\nu}$. If $\NM(\mu_1,\sigma_1^2)=\NM(c,d)$ and $\NM(\mu_2,\sigma_2^2)=\NM(0,0)$, we recover the probabilistic version of ReLU. In this case,
\begin{align*}
E(z)&=\Phi(\frac{c}{\sqrt{d}})c+\sqrt{\frac{d}{2\pi}}\exp\{-\frac{1}{2}\frac{c^2}{d}\}\\
D(z)&=E(z^2)-E(z)^2=\Phi(\frac{c}{\sqrt{d}})(c^2+d)+\frac{c\sqrt{d}}{\sqrt{2\pi}}\exp\{-\frac{1}{2}\frac{c^2}{d}\}-c^2.
\end{align*}
Hence we have the following equations as in the main text:
\begin{align*}
\a_m^{(l)}&=\Phi (\oo_m\circ{\oo_s^{(l)}}^{-\frac{1}{2}})\circ\oo_m+\frac{\sqrt{\oo_s}}{\sqrt{2\pi}}\circ\exp(-\frac{{\oo_m}^2}{2\oo_s})\\
\a_s^{(l)}&=\Phi (\oo_m\circ{\oo_s^{(l)}}^{-\frac{1}{2}})\circ({\oo_m}^2+\oo_s)+\frac{\oo_m^{(l)}\circ\sqrt{\oo_s}}{\sqrt{2\pi}}\circ\exp(-\frac{{\oo_m}^2}{2\oo_s})-\a_m^2.
\end{align*}

\section{Mapping Function for Poisson Distributions}\label{sec:poisson_mapping}
Since the mapping function involves Gaussian approximation to a Poisson distribution, we start with proving the connection between Gaussian distributions and Poisson distributions.
\begin{lemma}\label{le:mrf}
Assume $Y$ is a Poisson random variable with mean $c$ and variance $c$. If $X_1,X_2,\dots,X_c$ are independent Poisson random variables with mean $1$, we have:
\begin{align*}
Y=\sum\limits_{i=1}^c X_i
\end{align*}
\end{lemma}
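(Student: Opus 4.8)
The plan is to read the claimed identity $Y=\sum_{i=1}^c X_i$ as an equality \emph{in distribution} --- the sum of $c$ independent unit-rate Poisson variables has the same law as a single Poisson variable of mean $c$ --- and to establish it by comparing generating functions, which characterize the distribution uniquely. Throughout I would treat $c$ as a positive integer, since it indexes the number of summands.

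First I would recall the probability generating function of a Poisson random variable $X\sim\text{Poisson}(\lambda)$, namely $G_X(s)=E(s^X)=\sum_{k=0}^{\infty}s^k\frac{\lambda^k e^{-\lambda}}{k!}=e^{\lambda(s-1)}$, obtained by recognizing the Taylor series of $e^{\lambda s}$. Next, using the independence of $X_1,\dots,X_c$, the generating function of their sum factorizes, $G_{\sum_i X_i}(s)=\prod_{i=1}^c G_{X_i}(s)=\big(e^{(s-1)}\big)^c=e^{c(s-1)}$, since each $X_i$ has mean $1$. This is exactly $G_Y(s)$ for $Y\sim\text{Poisson}(c)$, so by uniqueness of the generating function (equivalently, by matching the coefficients of $s^k$ term by term) the two laws coincide, giving $P(Y=k)=\frac{c^k e^{-c}}{k!}$ for every nonnegative integer $k$.

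As an alternative that avoids generating functions, I would prove the two-summand case $X_1+X_2\sim\text{Poisson}(\lambda_1+\lambda_2)$ directly by convolution, writing $P(X_1+X_2=k)=\sum_{j=0}^{k}\frac{\lambda_1^{j}e^{-\lambda_1}}{j!}\frac{\lambda_2^{k-j}e^{-\lambda_2}}{(k-j)!}$ and collapsing the sum with the binomial theorem into $\frac{(\lambda_1+\lambda_2)^k e^{-(\lambda_1+\lambda_2)}}{k!}$; the general statement then follows by induction on $c$ with base case $X_1\sim\text{Poisson}(1)$.

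The computation here is entirely routine, so the only point that really needs care is conceptual: the equality must be read as equality in law, and the argument is valid only for integer $c$. The subsequent use of this lemma in the Poisson NPN mapping function extends the underlying idea through a central-limit / Gaussian-approximation argument in which $c$ need no longer be an integer, but that extension lies outside the scope of the lemma itself.
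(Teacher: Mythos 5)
Your proof is correct and takes essentially the same route as the paper's: both compare generating functions of $Y$ and of $\sum_{i=1}^c X_i$ (the paper uses the moment generating function $E(\exp(tZ))$, you use the probability generating function $E(s^X)$, which differ only by the substitution $s=e^{t}$), factorize over the independent summands, and conclude by uniqueness of the transform. Your explicit caveat that the identity is an equality in distribution and requires integer $c$ is a sound clarification of a point the paper leaves implicit.
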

\begin{proof}
We can use the concept of moment generating functions (i.e., two distributions are identical if they have exactly the same moment generating function), which is defined as $M(t)=E(\exp(t Z))$ for a random variable $Z$, to prove the lemma. The moment generating function for a Poisson random variable with mean $c$ and variance $c$ is:
\begin{align*}
M_1(t)=\exp(c(\exp(t)-1)).
\end{align*}
On the other hand, the moment generating function for $\sum\limits_{i=1}^c X_i$ is:
\begin{align}
M_2(t)&=E(\exp(t\sum\limits_{i=1}^c X_i))\nonumber\\
&=E(\prod\limits_{i=1}^c\exp(tX_i))\nonumber\\
&=\prod\limits_{i=1}^c E(\exp(tX_i))\label{eq:independent}\\
&=\prod\limits_{i=1}^c \exp(\exp(t)-1)\label{eq:mgf}\\
&=\exp(c(\exp(t)-1))\nonumber\\
&=M_1(t),\nonumber
\end{align}
where Equation (\ref{eq:independent}) is due to the fact that $X_1,X_2,\dots,X_c$ are independent. Equation (\ref{eq:mgf}) is the result of using the moment generating functions of Poisson distributions. Since $\sum\limits_{i=1}^c X_i$ has exactly the same moment generating function as a Poisson random variable with mean $c$ and variance $c$, by definition of $Y$, we have:
\begin{align*}
Y=\sum\limits_{i=1}^c X_i
\end{align*}
\end{proof}
\begin{theorem}\label{th:gau_poi}
A Poisson distribution with mean $c$ and variance $c$ can be approximated by a Gaussian distribution $\NM(c,c)$ if $c$ is sufficiently large.
\end{theorem}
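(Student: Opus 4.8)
The plan is to recognize this theorem as an instance of the classical Central Limit Theorem (CLT), building directly on the decomposition just established in Lemma \ref{le:mrf}. By that lemma, a Poisson random variable $Y$ with mean and variance $c$ decomposes as $Y=\sum_{i=1}^c X_i$, where the $X_i$ are independent Poisson variables each with mean $1$ and variance $1$. This expresses $Y$ as a sum of i.i.d. terms, which is exactly the setting where the CLT applies, and it is why the lemma was proved immediately beforehand.

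First I would form the standardized sum $Z_c=\frac{Y-c}{\sqrt{c}}=\frac{\sum_{i=1}^c X_i-c\cdot 1}{\sqrt{c}\cdot 1}$, noting that the centering $c$ and scaling $\sqrt{c}$ are precisely the mean and standard deviation of $Y$. The CLT then gives $Z_c\xrightarrow{d}\NM(0,1)$ as $c\to\infty$, so that $Y$ is approximately distributed as $\NM(c,c)$ for large $c$, which is the claim.

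To make the convergence explicit and self-contained (reusing the moment-generating-function machinery from the proof of Lemma \ref{le:mrf}), I would also compute the MGF of $Z_c$ directly. Starting from $M_Y(t)=\exp(c(\exp(t)-1))$, one obtains $M_{Z_c}(t)=\exp(-t\sqrt{c})\,M_Y(t/\sqrt{c})=\exp(-t\sqrt{c})\exp(c(\exp(t/\sqrt{c})-1))$. Taylor expanding $\exp(t/\sqrt{c})$ to second order, the $t\sqrt{c}$ terms cancel and $\log M_{Z_c}(t)=\frac{t^2}{2}+O(c^{-1/2})\to\frac{t^2}{2}$, the log-MGF of the standard normal, again confirming $Z_c\xrightarrow{d}\NM(0,1)$.

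The main obstacle is not the analysis itself (both routes are routine) but the fact that Lemma \ref{le:mrf} requires $c$ to be a positive integer, whereas in the mapping function the parameter $\oo_c$ is continuous. The CLT-via-lemma argument is therefore only literally valid for integer $c$; the direct MGF computation is the cleaner backbone since it holds for all real $c>0$ and still yields the Gaussian limit. I would accordingly present the MGF calculation as the rigorous core and treat the i.i.d.-sum picture as the intuition explaining why $\NM(c,c)$, rather than some other variance, is the correct limiting distribution.
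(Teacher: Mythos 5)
Your proof is correct, and its first route is exactly the paper's: the paper likewise invokes Lemma \ref{le:mrf} to write $Y=\sum_{i=1}^{c}X_i$ with the $X_i$ i.i.d.\ Poisson of mean $1$, forms $\frac{Y-c}{\sqrt{c}}=\sqrt{c}(\frac{1}{c}\sum_{i=1}^{c}X_i-1)$, and applies the central limit theorem to conclude $Y\approx\NM(c,c)$ for large $c$. The genuine difference is your handling of non-integer $c$. The paper only appends the remark that ``the proof can be easily generalized to the case in which $c$ is a nonnegative real value'' without supplying that generalization, even though the quantity the theorem is actually applied to --- the proxy natural parameter $\oo_c^{(l)}$ produced by the KL-minimizing mapping --- is continuous. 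You instead promote the direct MGF computation to the rigorous core: $\log M_{Z_c}(t)=-t\sqrt{c}+c(e^{t/\sqrt{c}}-1)=\frac{t^2}{2}+O(c^{-1/2})$, which is valid for every real $c>0$ (the Poisson MGF $\exp\{c(e^t-1)\}$ requires no integrality of $c$) and, by the standard fact that pointwise convergence of MGFs near $0$ implies convergence in distribution, gives $Z_c\to\NM(0,1)$. Your expansion checks out: the $-t\sqrt{c}$ term cancels against $c\cdot t/\sqrt{c}$, leaving $t^2/2$ plus terms of order $c^{-1/2}$. In short, the lemma-based route buys intuition for why the limiting variance equals the mean (and is the reason Lemma \ref{le:mrf} exists), while your MGF route buys the generality that the paper's proof, as written, lacks; keeping the former as motivation and the latter as the proof, as you propose, is a strictly tighter argument than the paper's.
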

\begin{proof}
We first use $Y$ to denote the random variable corresponding to the Poisson distribution with mean $c$ and variance $c$. According to Lemma \ref{le:mrf}, we have $Y=\sum\limits_{i=1}^c X_i$ where $X_1,X_2,\dots,X_c$ are independent Poisson random variables with mean $1$. Hence,
\begin{align*}
\frac{Y-c}{\sqrt{c}}&=\frac{\sum\limits_{i=1}^c X_i-c}{\sqrt{c}}\\
&=\sqrt{c}(\frac{1}{c}\sum\limits_{i=1}^c X_i-1),
\end{align*}
where $\frac{1}{c}\sum\limits_{i=1}^c X_i$ is the sample mean. By the central limit theorem, we know that if $c$ is sufficiently large, $\sqrt{c}(\frac{1}{c}\sum\limits_{i=1}^c X_i-1)$ can be approximated by the Gaussian distribution $\NM(0,1)$. Thus $Y$ can be approximated by the Gaussian distribution $\NM(c,c)$.
\end{proof}

Note that although $c$ is a nonnegative integer above, the proof can be easily generalized to the case in which $c$ is a nonnegative real value.

During the feedforward computation of the Poisson NPN, after obtaining the mean $\oo_m^{(l)}$ and variance $\oo_s^{(l)}$ of the linearly transformed distribution over $\oo^{(l)}$, we map them back to the proxy natural parameters $\oo_c^{(l)}$. Unfortunately the mean and variance of a Poisson are the same, which is obviously not the case for $\oo_m^{(l)}$ and $\oo_s^{(l)}$. Here we propose to find $\oo_c^{(l)}$ by minimizing the KL divergence of the factorized Poisson distribution $p(\oo^{(l)}|\oo_c^{(l)})$ and the Gaussian distribution $\NM(\oo_m^{(l)},diag(\oo_s^{(l)}))$\footnote{The relationships between Poisson distributions and Gaussian distributions are described in Theorem \ref{th:gau_poi}. The theorem, however, cannot be directly used here since $\oo_m^{(l)}$ and $\oo_s^{(l)}$ are not identical. This is why we have to resort to the KL divergence.}.

Since the direct KL divergence involves the computation of an infinite series in the entropy term of the Poisson distribution, closed-form solutions are not available. To address the problem, we use a Gaussian distribution $\NM(\oo_c^{(l)},diag(\oo_c^{(l)}))$ as a proxy of the Poisson distribution with the mean $\oo_c^{(l)}$ (which is justified by Theorem \ref{th:gau_poi})\footnote{Note that for Theorem \ref{th:gau_poi} to be valid, $c$ has to be sufficiently large, which is why we do not normalize the word counts as preprocessing and why we use a large $r$ for the activation $v(x)=r(1-\exp(-\tau x))$.}. Specifically, we aim to find a Gaussian distribution $\NM(\oo_c^{(l)},diag(\oo_c^{(l)}))$ to best approximate $\NM(\oo_m^{(l)},diag(\oo_s^{(l)}))$ and directly use $\oo_c^{(l)}$ in the new Gaussian as the result of mapping.

For simplicity, we consider the univariate case where we aim to find a Gaussian distribution $\NM(c,c)$ to approximate $\NM(m,s)$. The KL divergence between $\NM(c,c)$ and $\NM(m,s)$
\begin{align*}
D_{KL}(\NM(c,c)\|\NM(m,s))=\frac{1}{2}(\frac{c}{s}+\frac{(c-m)^2}{s}-1+\log s-\log c),
\end{align*}
which is convex with respect to $c>0$. We set the gradient of $D_{KL}(\NM(c,c)\|\NM(m,s))$ with respect to $c$ as $0$ and solve for $c$, giving
\begin{align*}
c=\frac{2m-1\pm \sqrt{(2m-1)^2+8s}}{4}.
\end{align*}
Since in Poisson distributions, $c$ is always positive, there is only one solution for $c$:
\begin{align*}
c=\frac{2m-1+ \sqrt{(2m-1)^2+8s}}{4}.
\end{align*}
Thus the mapping is
\begin{align*}
\oo_c^{(l)} = \frac{1}{4}(2\oo_m^{(l)}-1+\sqrt{(2\oo_m^{(l)}-1)^2+8\oo_s^{(l)}}).
\end{align*}

\section{AUC for Link Prediction and Different Data Splitting}\label{sec:auc}
In this section, we show the AUC for different models on the link prediction task. As we can see in Table \ref{table:auc} above, the result in AUC is consistent with that in link rank (as shown in Table 3 of the paper). NPN is able to achieve much higher AUC than SAE, SDAE, and VAE. Among different variants of NPN, the Gaussian NPN seems to perform better in datasets with fewer words like \emph{Citeulike-t} ($18.8$ words per document). The Poisson NPN, as a more natural choice to model text, achieves the best performance in datasets with more words (\emph{Citeulike-a} with $66.6$ words per document and \emph{arXiv} with $88.8$ words per document).

For the link prediction task, we also try to split the data in a different way and compare the performance of different models. Specifically, we randomly select $80\%$ of the \emph{observed links} (rather than nodes) as the training set and use the others as the test set. The results are consistent with those for the original data-splitting method.

\begin{table*}[!tb]
\centering
\begin{footnotesize}
\caption{AUC on Three Datasets}\label{table:auc}
\vskip 0.2cm
\begin{tabular}{|l|ccc|ccc|} \hline
 Method  & SAE & SDAE & VAE  & gamma NPN & Gaussian NPN & Poisson NPN \\ \hline
\emph{Citeulike-a} &0.915 &0.917	&0.929 &0.938&	0.951 & \textbf{0.956}\\ %\hline
\emph{Citeulike-t} &0.891&0.920	&0.922&0.936&	\textbf{0.940} & 0.934 \\ %\hline
\emph{arXiv} &0.811&0.840	&0.834&0.861 &	0.878  & \textbf{0.879}\\ \hline
\end{tabular}
\end{footnotesize}
\vskip -0.4cm
\end{table*}

\section{Hyperparameters and Preprocessing}
In this section we provide details on the hyperparameters and preprocessing of the experiments as mentioned in the paper.
\subsection{Toy Regression Task}
For the toy 1d regression task, we use networks with one hidden layer containing $100$ neurons and ReLU activation, as in \cite{balan2015bayesian,DBLP:conf/icml/Hernandez-Lobato15b}. For the Gaussian NPN, we use the KL divergence loss and isotropic Gaussian priors with precision $10^{-4}$ for the weights (and biases). The same priors are used in other experiments.

\subsection{MNIST Classification}
For preprocessing, following \cite{DBLP:conf/icml/BlundellCKW15,balan2015bayesian}, pixel values are normalized to the range $[0,1]$. For the NPN variants, we use these hyperparameters: minibatch size $128$, number of epochs $2000$ (the same as BDK). For the learning rate, AdaDelta is used. Note that since NPN is dropout-compatible, we can use dropout (with nearly no additional cost) for effective regularization. The training and testing of dropout NPN are similar to those of the vanilla dropout NN.

\subsection{Second-Order Representation Learning}\label{sec:denoise}
For all models, we preprocess the BOW vectors by normalizing them into the range $[0,1]$. Although theoretically Poisson NPN does not need any preprocessing since Poisson distributions naturally model word counts, in practice, we find normalizing the BOW vectors will increase both stability during training and the predictive performance. For simplicity, in the Poisson NPN, $r$ is set to $1$ and $\tau=0.1$ (these two hyperparameters can be tuned to further improve performance). For the Gaussian NPN, sigmoid activation is used. The other hyperparameters of NPN are the same as in the MNIST experiments.

\begin{algorithm}
\caption{Deep Nonlinear NPN}\label{alg:npn}
\begin{algorithmic}[1]
\STATE \textbf{Input:} The data $\{(\x_i,\y_i)\}_{i=1}^{N}$, number of iterations $T$, learning rate $\rho_t$, number of layers $L$.
\FOR{$t=1:T$}
\FOR{$l=1:L$}
\STATE Compute ($\oo_m^{(l)},\oo_s^{(l)})$ from $(\a_m^{(l-1)},\a_s^{(l-1)})$.
$(\oo_c^{(l)},\oo_d^{(l)})=f^{-1}(\oo_m^{(l)},\oo_s^{(l)})$.
\STATE Compute $(\a_m^{(l)},\a_s^{(l)})$ from $(\oo_c^{(l)},\oo_d^{(l)})$.
\ENDFOR
\STATE Compute the error $E$.
\FOR{$l=L:1$}
\STATE Compute $\frac{\partial E}{\partial \W_m^{(l)}}$, $\frac{\partial E}{\partial \W_s^{(l)}}$, $\frac{\partial E}{\partial \b_m^{(l)}}$, and $\frac{\partial E}{\partial \b_s^{(l)}}$.
Compute $\frac{\partial E}{\partial \W_c^{(l)}}$, $\frac{\partial E}{\partial \W_d^{(l)}}$, $\frac{\partial E}{\partial \b_c^{(l)}}$, and $\frac{\partial E}{\partial \b_d^{(l)}}$.
\ENDFOR
\STATE Update $\W_c^{(l)}$, $\W_d^{(l)}$, $\b_c^{(l)}$, and $\b_d^{(l)}$ in all layers.
\ENDFOR
\end{algorithmic}
\end{algorithm}

\section{Details on Variants of NPN}\label{sec:npn_variant}
\subsection{Gamma NPN}\label{sec:gamma_sup}
In gamma NPN, parameters $\W_c^{(l)}$, $\W_d^{(l)}$, $\b_c^{(l)}$, and $\b_d^{(l)}$ can be learned following Algorithm \ref{alg:npn}. Specifically, during the feedforward phase, we will compute the error $E$ given the input $\a_m^{(0)}=\x$ ($\a_s^{(0)}=\0$) and the parameters ($\W_c^{(l)}$, $\W_d^{(l)}$, $\b_c^{(l)}$, and $\b_d^{(l)}$). With the mean $\a_m^{(l-1)}$ and variance $\a_s^{(l-1)}$ from the previous layer, $\oo_m^{(l)}$ and $\oo_s^{(l)}$ can be computed according to equations in Section 2.2 of the paper, where
{\small
\begin{align}
(\W_m^{(l)},\W_s^{(l)}) = (\W_c^{(l)}\circ{\W_d^{(l)}}^{-1},\W_c^{(l)}\circ{\W_d^{(l)}}^{-2}),~~
(\b_m^{(l)},\b_s^{(l)}) = (\b_c^{(l)}\circ{\b_d^{(l)}}^{-1},\b_c^{(l)}\circ{\b_d^{(l)}}^{-2}).\label{eq:gamma}
\end{align}
}
After that we can get the proxy natural parameters using $(\oo_c^{(l)},\oo_d^{(l)})=(\oo_m^{(l)}\circ{\oo_s^{(l)}}^{-1},{\oo_m^{(l)}}^2\circ\oo_s^{(l)})$.

With the proxy natural parameters for the gamma distributions over $\oo^{(l)}$, the mean $\a_m^{(l)}$ and variance $\a_s^{(l)}$ for the nonlinearly transformed distribution over $\a^{(l)}$ would be obtained. As mentioned before, using traditional activation functions like tanh $v(x)=\tanh(x)$ and ReLU $v(x)=\max(0,x)$ could not give us closed-form solutions for the integrals. Following Theorem \ref{th:moment}, closed-form solutions are possible with $v(x)=r(1-\exp(-\tau x))$ ($r=q$ and $u_i(x)=x$) where $r$ and $\tau$ are constants. This function has a similar shape with the positive half of $v(x)=\tanh(x)$ with $r$ as the saturation point and $\tau$ controlling the slope.

With the computation procedure for the feedforward phase, the gradients of $E$ with respect to parameters $\W_c^{(l)}$, $\W_d^{(l)}$, $\b_c^{(l)}$, and $\b_d^{(l)}$ can be derived and used for backpropagation. Note that to ensure positive entries in the parameters we can use the function $k(x)=\log(1+\exp(x))$ or $k(x)=\exp(x-h)$. For example, we can let $\b_c^{(l)}=\log(1+\exp(\b_{c'}^{(l)}))$ and treat $\b_{c'}^{(l)}$ as parameters to learn instead of $\b_c^{(l)}$.

We can add the KL divergence between the learned distribution and the prior distribution on weights to the objective function to regularize gamma NPN. If we use an isotropic Gaussian prior $\NM(0,\lambda_s^{-1})$ for each entry of the weights, we can compute the KL divergence for each entry (between $Gam(c,d)$ and $\NM(0,\lambda_s^{-1})$) as:
{\small
\begin{align}
&~~~~~~KL(Gam(x|c,d)\|\NM(x|0,\lambda_s^{-1}))\nonumber\\
&=\int Gam(x|c,d)\log Gam(x|c,d) dx-\int Gam(x|c,d)\log \NM(x|0,\lambda_s^{-1})\nonumber\\
&=-\log\Gamma(c)+(c-1)\psi(c)+\log d-c+\frac{1}{2}\log(2\pi)-\frac{1}{2}\log\lambda_s+\frac{1}{2}\lambda_s\int\frac{d^c}{\Gamma(c)}x^{c+2-1}\exp(-dx) dx\nonumber\\
&=-\log\Gamma(c)+(c-1)\psi(c)+\log d-c+\frac{1}{2}\log(2\pi)-\frac{1}{2}\log\lambda_s+\frac{1}{2}\lambda_s\frac{\Gamma(c+2)}{\Gamma(c)}\nonumber\\
&=-\log\Gamma(c)+(c-1)\psi(c)+\log d-c+\frac{1}{2}\log(2\pi)-\frac{1}{2}\log\lambda_s+\frac{1}{2}\lambda_s c(c+1),\label{eq:kl_gamma}
\end{align}
where $\psi(x)=\frac{d}{dx}\log\Gamma(x)$ is the digamma function.

}\subsection{Gaussian NPN}
For details on the Bayesian nonlinear transformation, please refer to Section \ref{sec:gaussian} above. For the KL divergence between the learned distribution and the prior distribution on weights, we can compute it as:
\begin{align}
KL(\NM(x|c,d)\|\NM(x|0,\lambda_s^{-1}))
=\frac{1}{2}(\lambda_s+\lambda_s c^2-1-\log\lambda_s-\log d),
\end{align}
As we can see, the term $-\frac{1}{2}\log d$ will help to prevent the learned variance $d$ from collapsing to $0$ (in practice we can use $\frac{1}{2}\lambda_d(d-h)^2$, where $\lambda_d$ and $h$ are hyperparameters, to approximate this term for better numerical stability) and the term $\frac{1}{2}c^2$ is equivalent to L2 regularization. Similar to BDK, we can use a mixture of Gaussians as the prior distribution.

\subsection{Poisson NPN}\label{sec:poisson_full}%\footnote{context link forward and backward}
The Poisson distribution, as another member of the exponential family, is often used to model counts (e.g., number of events happened or number of words in a document). Different from the previous distributions, it has support over nonnegative integers. The Poisson distribution takes the form $p(x|c)=\frac{c^x\exp(-c)}{x!}$ with one single natural parameter $\log c$ (we use $c$ as the proxy natural parameter). It is this single natural parameter that makes the learning of a Poisson NPN trickier. For text modeling, assuming Poisson distributions for neurons is natural because they can model the counts of words and topics (even super topics) in documents. Here we assume a factorized Poisson distribution $p(\oo^{(l)}|\oo_c^{(l)})=\prod_j p(\oo_j^{(l)}|\oo_{c,j}^{(l)})$ and do the same for $\a^{(l)}$. To ensure having positive natural parameters we use gamma distributions for the weights. Interestingly, this design of Poisson NPN can be seen as a neural analogue of some Poisson factor analysis models \cite{DBLP:journals/jmlr/ZhouHDC12}.

Following Algorithm \ref{alg:npn}, we need to compute $E$ during the feedforward phase given the input $\a_m^{(0)}=\x$ ($\a_s^{(0)}=\0$) and the parameters ($\W_c^{(l)}$, $\W_d^{(l)}$, $\b_c^{(l)}$, and $\b_d^{(l)}$), the first step being to compute the mean $\oo_m^{(l)}$ and variance $\oo_s^{(l)}$. Since gamma distributions are assumed for the weights, we can compute the mean and variance of the weights as follows:
{\small
\begin{align}
(\W_m^{(l)},\W_s^{(l)}) = (\W_c^{(l)}\circ{\W_d^{(l)}}^{-1},\W_c^{(l)}\circ{\W_d^{(l)}}^{-2}),~~
(\b_m^{(l)},\b_s^{(l)}) = (\b_c^{(l)}\circ{\b_d^{(l)}}^{-1},\b_c^{(l)}\circ{\b_d^{(l)}}^{-2}).\label{eq:gamma_sup}
\end{align}
}Having computed the mean $\oo_m^{(l)}$ and variance $\oo_s^{(l)}$ of the linearly transformed distribution over $\oo^{(l)}$, we map them back to the proxy natural parameters $\oo_c^{(l)}$. Unfortunately the mean and variance of a Poisson are the same, which is obviously not the case for $\oo_m^{(l)}$ and $\oo_s^{(l)}$. Hence we propose to find $\oo_c^{(l)}$ by minimizing the KL divergence of the factorized Poisson distribution $p(\oo^{(l)}|\oo_c^{(l)})$ and the Gaussian distribution $\NM(\oo_m^{(l)},diag(\oo_s^{(l)}))$, resulting in the mapping (see Section \ref{sec:poisson_mapping} for proofs and justifications):
\begin{align}
\oo_c^{(l)} = \frac{1}{4}(2\oo_m^{(l)}-1+\sqrt{(2\oo_m^{(l)}-1)^2+8\oo_s^{(l)}}). \label{eq:map}
\end{align}
After finding $\oo_c^{(l)}$, the next step in Algorithm \ref{alg:npn} is to get the mean $\a_m^{(l)}$ and variance $\a_s^{(l)}$ of the nonlinearly transformed distribution. As is the case for gamma NPN, traditional activation functions will not give us closed-form solutions. Fortunately, the activation $v(x)=r(1-\exp(-\tau x))$ also works for Poisson NPN. Specifically,
\begin{align*}
\a_m=r\sum\limits_{x=0}^{+\infty}\frac{\oo_c^x\exp(-\oo_c)}{x!}(1-\exp(-\tau x))
=r(1-\exp((\exp(-\tau)-1)\oo_c)),
\end{align*}
where the superscript $(l)$ is dropped. Similarly, we have
\begin{align*}
\a_s=r^2(\exp((\exp(-2\tau)-1)\oo_c)
-\exp(2(\exp(-\tau)-1)\oo_c)).
\end{align*}
Full derivation is provided in Section \ref{sec:poisson_activation}.

Once we go through $L$ layers to get the proxy natural parameters $\oo_c^{(L)}$ for the distribution over $\oo^{(L)}$, the error $E$ can be computed as the negative log-likelihood. Assuming that the target output $\y$ has nonnegative integers as entries,
\begin{align*}
E = -\1^T(\y\circ\log\oo_c^{(L)}-\oo_c^{(L)}-\log(\y!)).
\end{align*}
For $\y$ with real-valued entries, the L2 loss could be used as the error $E$. Note that if we use the normalized BOW as the target output, the same error $E$ can be used as the Gaussian NPN. Besides this loss term, we can add the KL divergence term in Equation (\ref{eq:kl_gamma}) to regularize Poisson NPN.

During backpropagation, the gradients are computed to update the parameters $\W_c^{(l)}$, $\W_d^{(l)}$, $\b_c^{(l)}$, and $\b_d^{(l)}$. Interestingly, since $\oo_c^{(l)}$ is guaranteed to be nonnegative, the model still works even if we directly use $\W_m^{(l)}$ and $\W_s^{(l)}$ as parameters, though the resulting models are not exactly the same. In the experiments, we use this Poisson NPN for a Bayesian autoencoder and feed the extracted second-order representations into a Bayesian LR algorithm for link prediction.

\section{Derivation of Gradients}
In this section we list the gradients used in backpropagation to update the NPN parameters.
\subsection{Gamma NPN}
In the following we assume an activation function of $v(x)=r(1-\exp(-\tau x))$ and use $\psi(x)=\frac{d}{dx}\log\Gamma(x)$ to denote the \emph{digamma} function. $E$ is the error we want to minimize.

\textbf{$E\rightarrow\oo^{(L)}$}:
\begin{align*}
\frac{\partial E}{\partial \oo_c^{(L)}}&=\psi(\oo_c^{(L)})-\log\oo_d^{(L)}-\log \y\\
\frac{\partial E}{\partial \oo_d^{(L)}}&=-\frac{\oo_c^{(L)}}{\oo_d^{(L)}}+\y.
\end{align*}
\textbf{$\oo^{(l)}\rightarrow\a^{(l-1)}$}:
\begin{align*}
\frac{\partial E}{\partial \a_m^{(l-1)}}&=\frac{\partial E}{\partial \oo_m^{(l)}} {\W_m^{(l)}}^T+(\frac{\partial E}{\partial \oo_s^{(l)}}{\W_s^{(l)}}^T)\circ 2\a_m^{(l-1)}\\
\frac{\partial E}{\partial \a_s^{(l-1)}}&=\frac{\partial E}{\partial \oo_s^{(l)}} {\W_s^{(l)}}^T+\frac{\partial E}{\partial \oo_s^{(l)}}{(\W_m^{(l)}\circ\W_m^{(l)})}^T
\end{align*}
\textbf{$\a^{(l)}\rightarrow\oo^{(l)}$}:
\begin{align*}
\frac{\partial E}{\partial \oo_c^{(l)}}&=\frac{\partial E}{\partial \a_m^{(l)}}\circ(-r(\frac{\oo_d^{(l)}}{\oo_d^{(l)}+\tau})^{\oo_c^{(l)}}\circ\log(\frac{\oo_d^{(l)}}{\oo_d^{(l)}+\tau}))\\
&~~~~+r^2\frac{\partial E}{\partial \a_s^{(l)}}(
(\frac{\oo_d^{(l)}}{\oo_d^{(l)}+2\tau})^{\oo_c^{(l)}}\circ\log(\frac{\oo_d^{(l)}}{\oo_d^{(l)}+2\tau})
-2(\frac{\oo_d^{(l)}}{\oo_d^{(l)}+2\tau})^{2\oo_c^{(l)}}\circ\log(\frac{\oo_d^{(l)}}{\oo_d^{(l)}+2\tau}))\\
\frac{\partial E}{\partial \oo_c^{(l)}}&=\frac{\partial E}{\partial\a_m^{(l)}}\circ(-r
\oo_c^{(l)}\circ(\frac{\oo_d^{(l)}}{\oo_d^{(l)}+\tau})^{\oo_c^{(l)}-1}\circ\frac{\tau}{(\oo_d^{(l)}+\tau)^2}
)\\
&~~~~+r^2\frac{\partial E}{\partial\a_s^{(l)}}\circ(
\oo_c^{(l)}\circ(\frac{\oo_d^{(l)}}{\oo_d^{(l)}+2\tau})^{\oo_c^{(l)}-1}\circ\frac{2\tau}{(\oo_d^{(l)}+2\tau)^2}\\
&~~~~-2
\oo_c^{(l)}\circ(\frac{\oo_d^{(l)}}{\oo_d^{(l)}+\tau})^{2\oo_c^{(l)}-1}\circ\frac{\tau}{(\oo_d^{(l)}+\tau)^2}
).
\end{align*}
\textbf{$\oo^{(l)}\rightarrow\W^{(l)},\oo^{(l)}\rightarrow\b^{(l)}$}:

The gradients with respect to the mean-variance pairs:
\begin{align*}
\frac{\partial E}{\partial \W_m^{(l)}}&={\a_m^{(l-1)}}^T\frac{\partial E}{\partial \oo_m^{(l)}}+({\a_s^{(l-1)}}^T\frac{\partial E}{\partial \oo_s^{(l)}})\circ 2\W_m^{(l)}\\
\frac{\partial E}{\partial \W_s^{(l)}}&={\a_s^{(l-1)}}^T\frac{\partial E}{\partial \oo_s^{(l)}}+(\a_m^{(l-1)}\circ\a_m^{(l-1)})^T\frac{\partial E}{\partial \oo_s^{(l)}}\\
\frac{\partial E}{\partial \b_m^{(l)}}&=\frac{\partial E}{\partial \oo_m^{(l)}}\\
\frac{\partial E}{\partial \b_s^{(l)}}&=\frac{\partial E}{\partial \oo_s^{(l)}}
\end{align*}

The gradients with respect to the proxy natural parameters:
\begin{align*}
\frac{\partial E}{\partial\W_c^{(l)}}&=\frac{\partial E}{\partial \W_m^{(l)}}\circ\frac{1}{\W_d^{(l)}}+\frac{\partial E}{\partial \W_s^{(l)}}\circ\frac{1}{{\W_d^{(l)}}^2}\\
\frac{\partial E}{\partial \W_d^{(l)}}&=-\frac{\partial E}{\partial \W_m^{(l)}}\circ\frac{\W_c^{(l)}}{{\W_d^{(l)}}^2}-2\frac{\partial E}{\partial\W_s^{(l)}}\circ\frac{\W_c^{(l)}}{{\W_d^{(l)}}^3}\\
\frac{\partial E}{\partial\b_c^{(l)}}&=\frac{\partial E}{\partial \b_m^{(l)}}\circ\frac{1}{\b_d^{(l)}}+\frac{\partial E}{\partial \b_s^{(l)}}\circ\frac{1}{{\b_d^{(l)}}^2}\\
\frac{\partial E}{\partial \b_d^{(l)}}&=-\frac{\partial E}{\partial \b_m^{(l)}}\circ\frac{\b_c^{(l)}}{{\b_d^{(l)}}^2}-2\frac{\partial E}{\partial\b_s^{(l)}}\circ\frac{\b_c^{(l)}}{{\b_d^{(l)}}^3}
\end{align*}

\subsection{Gaussian NPN}
In the following we assume the sigmoid activation function and use cross-entropy loss. Other activation functions and loss could be derived similarly. For the equations below, $\alpha=4-2\sqrt{2}$, $\beta=-\log(\sqrt{2}+1)$, $\zeta^2=\frac{\pi}{8}$, and $\kappa(x)=(1+\zeta^2x)^{-\frac{1}{2}}$.

\textbf{$E\rightarrow \oo^{(L)}$}:
\begin{align*}
\frac{\partial E}{\partial \oo_m^{(L)}}&=(\sigma(\kappa(\oo_s^{(L)})\circ\oo_m^{(L)})-\y)\circ\kappa(\oo_s^{(L)})\\
\frac{\partial E}{\partial \oo_s^{(L)}}&=(\sigma(\kappa(\oo_s^{(L)})\circ\oo_m^{(L)})-\y)\circ\oo_m^{(L)}\circ(-\frac{\pi}{16}(1+\pi\oo_s^{(L)}/8)^{-3/2}).
\end{align*}
\textbf{$\oo^{(l)}\rightarrow\a^{(l-1)}$}:
\begin{align*}
\frac{\partial E}{\partial \a_m^{(l-1)}}&=\frac{\partial E}{\partial \oo_m^{(l)}} {\W_m^{(l)}}^T+(\frac{\partial E}{\partial \oo_s^{(l)}}{\W_s^{(l)}}^T)\circ 2\a_m^{(l-1)}\\
\frac{\partial E}{\partial \a_s^{(l-1)}}&=\frac{\partial E}{\partial \oo_s^{(l)}} {\W_s^{(l)}}^T+\frac{\partial E}{\partial \oo_s^{(l)}}{(\W_m^{(l)}\circ\W_m^{(l)})}^T.
\end{align*}
\textbf{$\a^{(l)}\rightarrow \oo^{(l)}$}:
\begin{align*}
\frac{\partial E}{\partial \oo_m^{(l)}}&=\frac{\partial E}{\partial \a_m^{(l)}}\circ dsigmoid(\kappa(\oo_s^{(l)})\circ\oo_m^{(l)})\circ\kappa(\oo_s^{(l)})\\
&~~~~+\alpha\frac{\partial E}{\partial \a_s^{(l)}}\circ dsigmoid(\frac{\alpha(\oo_m^{(l)}+\beta)}{(1+\zeta^2\alpha^2\oo_s^{(l)})^{1/2}})\circ(1+\zeta^2\alpha^2\oo_s^{(l)})^{-1/2}\\
&~~~~-2\a_m^{(l)}\circ\frac{\partial E}{\partial \a_s^{(l)}}\circ dsigmoid(\kappa(\oo_s^{(l)})\circ\oo_m^{(l)})\circ\kappa(\oo_s^{(l)})\\
\frac{\partial E}{\partial \oo_s^{(l)}}&=\frac{\partial E}{\partial \a_m^{(l)}}\circ dsigmoid(\kappa(\oo_s^{(l)})\circ\oo_m^{(l)})\circ\oo_m^{(l)}\circ(-\frac{1}{2}\zeta^2(1+\zeta^2\oo_s^{(l)})^{-3/2})\\
&~~~~+\frac{\partial E}{\partial \a_s^{(l)}}\circ dsigmoid(\frac{\alpha(\oo_m^{(l)}+\beta)}{(1+\zeta^2\alpha^2\oo_s^{(l)})^{1/2}})\circ (\alpha(\oo_m^{(l)}+\beta))\circ (-\frac{1}{2}\zeta^2\alpha^2(1+\zeta^2\alpha^2\oo_s^{(l)})^{-3/2})\\
&~~~~-2\a_m^{(l)}\circ\frac{\partial E}{\partial \a_s^{(l)}}\circ dsigmoid(\kappa(\oo_s^{(l)})\circ\oo_m^{(l)})\circ\oo_m^{(l)}\circ(-\frac{1}{2}\zeta^2(1+\zeta^2\oo_s^{(l)})^{-3/2}),
\end{align*}
where $dsigmoid(x)$ is the gradient of $\sigma(x)$.

\textbf{$\oo^{(l)}\rightarrow\W^{(l)},\oo^{(l)}\rightarrow\b^{(l)}$}:
\begin{align*}
\frac{\partial E}{\partial \W_c^{(l)}}&={\a_m^{(l-1)}}^T\frac{\partial E}{\partial \oo_m^{(l)}}+({\a_s^{(l-1)}}^T\frac{\partial E}{\partial \oo_s^{(l)}})\circ 2\W_c^{(l)}\\
\frac{\partial E}{\partial \W_d^{(l)}}&={\a_s^{(l-1)}}^T\frac{\partial E}{\partial \oo_s^{(l)}}+(\a_m^{(l-1)}\circ\a_m^{(l-1)})^T\frac{\partial E}{\partial \oo_s^{(l)}}\\
\frac{\partial E}{\partial \b_c^{(l)}}&=\frac{\partial E}{\partial \oo_m^{(l)}}\\
\frac{\partial E}{\partial \b_d^{(l)}}&=\frac{\partial E}{\partial \oo_s^{(l)}}
\end{align*}
Note that we directly use the mean and variance as proxy natural parameters here.

\subsection{Poisson NPN}
In the following we assume the activation function $v(x)=r(1-\exp(\tau x))$ and use Poisson regression loss $E = \1^T(\y\circ\log\oo_c^{(L)}-\oo_c^{(L)}-\log(\y!))$ (the target output $\y$ is a vector with nonnegative integer entries). Gamma distributions are used on weights.

\textbf{$E\rightarrow \oo_c^{(L)}$}:
\begin{align*}
\frac{\partial E}{\partial \oo_c^{(L)}}=\frac{\y}{\oo_c^{(L)}}-1.
\end{align*}
\textbf{$\oo_c^{(l)}\rightarrow\oo_m^{(l)},\oo_c^{(l)}\rightarrow\oo_s^{(l)}$}:
\begin{align*}
\frac{\partial E}{\partial\oo_m^{(l)}}&=\frac{\partial E}{\oo_c^{(l)}}\circ(\frac{1}{2}+\frac{1}{2}((2\oo_m^{(l)}-1)^2+8\oo_s^{(l)})^{-\frac{1}{2}}\circ(2\oo_m^{(l)}-1))\\
\frac{\partial E}{\partial\oo_s^{(l)}}&=\frac{\partial E}{\oo_c^{(l)}}\circ((2\oo_m^{(l)}-1)^2+8\oo_s^{(l)})^{-\frac{1}{2}}.
\end{align*}
\textbf{$\oo^{(l)}\rightarrow\a^{(l-1)}$}:
\begin{align*}
\frac{\partial E}{\partial \a_m^{(l-1)}}&=\frac{\partial E}{\partial \oo_m^{(l)}} {\W_m^{(l)}}^T+(\frac{\partial E}{\partial \oo_s^{(l)}}{\W_s^{(l)}}^T)\circ 2\a_m^{(l-1)}\\
\frac{\partial E}{\partial \a_s^{(l-1)}}&=\frac{\partial E}{\partial \oo_s^{(l)}} {\W_s^{(l)}}^T+\frac{\partial E}{\partial \oo_s^{(l)}}{(\W_m^{(l)}\circ\W_m^{(l)})}^T.
\end{align*}
\textbf{$\a^{(l)}\rightarrow\oo_c^{(l)}$}:
\begin{align*}
\frac{\partial E}{\partial\oo_c^{(l)}}&=-r(\exp(-\tau)-1)\frac{\partial E}{\partial \a_m^{(l)}}\circ\exp((\exp(-\tau)-1)\oo_c^{(l)})\\
&~~~~+r^2\frac{\partial E}{\partial\a_s^{(l)}}\circ((\exp(-2\tau)-1)\exp((\exp(-2\tau)-1)\oo_c^{(l)})\\
&~~~~-2(\exp(-\tau)-1)\exp(2(\exp(-\tau)-1)\oo_c^{(l)}))
\end{align*}
\textbf{$\oo^{(l)}\rightarrow\W^{(l)},\oo^{(l)}\rightarrow\b^{(l)}$}:

The gradients with respect to the mean-variance pairs:
\begin{align*}
\frac{\partial E}{\partial \W_m^{(l)}}&={\a_m^{(l-1)}}^T\frac{\partial E}{\partial \oo_m^{(l)}}+({\a_s^{(l-1)}}^T\frac{\partial E}{\partial \oo_s^{(l)}})\circ 2\W_m^{(l)}\\
\frac{\partial E}{\partial \W_s^{(l)}}&={\a_s^{(l-1)}}^T\frac{\partial E}{\partial \oo_s^{(l)}}+(\a_m^{(l-1)}\circ\a_m^{(l-1)})^T\frac{\partial E}{\partial \oo_s^{(l)}}\\
\frac{\partial E}{\partial \b_m^{(l)}}&=\frac{\partial E}{\partial \oo_m^{(l)}}\\
\frac{\partial E}{\partial \b_s^{(l)}}&=\frac{\partial E}{\partial \oo_s^{(l)}}
\end{align*}

The gradients with respect to the proxy natural parameters:
\begin{align*}
\frac{\partial E}{\partial\W_c^{(l)}}&=\frac{\partial E}{\partial \W_m^{(l)}}\circ\frac{1}{\W_d^{(l)}}+\frac{\partial E}{\partial \W_s^{(l)}}\circ\frac{1}{{\W_d^{(l)}}^2}\\
\frac{\partial E}{\partial \W_d^{(l)}}&=-\frac{\partial E}{\partial \W_m^{(l)}}\circ\frac{\W_c^{(l)}}{{\W_d^{(l)}}^2}-2\frac{\partial E}{\partial\W_s^{(l)}}\circ\frac{\W_c^{(l)}}{{\W_d^{(l)}}^3}\\
\frac{\partial E}{\partial\b_c^{(l)}}&=\frac{\partial E}{\partial \b_m^{(l)}}\circ\frac{1}{\b_d^{(l)}}+\frac{\partial E}{\partial \b_s^{(l)}}\circ\frac{1}{{\b_d^{(l)}}^2}\\
\frac{\partial E}{\partial \b_d^{(l)}}&=-\frac{\partial E}{\partial \b_m^{(l)}}\circ\frac{\b_c^{(l)}}{{\b_d^{(l)}}^2}-2\frac{\partial E}{\partial\b_s^{(l)}}\circ\frac{\b_c^{(l)}}{{\b_d^{(l)}}^3}
\end{align*}

\small{
\bibliography{NPN-no-page,bnn-no-page,nn-no-page}
\bibliographystyle{abbrv}
}

\end{document}